\newcommand\bcmdtab{\noindent\bgroup\tabcolsep=0pt
  \begin{tabular}{@{}p{10pc}@{}p{20pc}@{}}}
\newcommand\ecmdtab{\end{tabular}\egroup}
\newtheorem{definition}{Definition}
\newtheorem{example}{Example}
\newtheorem{proposition}{Proposition}
\newtheorem{thm}{Theorem}
\newcommand{\supportplus}{\ensuremath{\overset{+}{\Rightarrow}}}
\newcommand{\supportone}{\ensuremath{\overset{\beta_1}{\Rightarrow}}}
\newcommand{\supporttwo}{\ensuremath{\overset{\beta_2}{\Rightarrow}}}
\newcommand{\supportn}{\ensuremath{\overset{\beta_n}{\Rightarrow}}}
\newcommand{\wrt}{w.r.t.\xspace}
\renewcommand{\iff}{iff\xspace}
\newcommand{\ie}{\textit{i.e.}\xspace}
\newcommand{\Set}[1]{\ensuremath{\mathbf{#1}}}
\newcommand{\AFargsNew}{\ensuremath{A}\xspace}
\newcommand{\AFattRelNew}{\ensuremath{\Omega}\xspace}
\newcommand{\AFNew}{\ensuremath{\langle \AFargsNew, \AFattRelNew\rangle}\xspace}
\def\imply{{\text{implies}}}
\def\aaf{{\mathcal{A}}}
\def\S{\ensuremath{\sigma}}
\def\ssup{\ensuremath{\textit{sup}}}
\def\reach0{\it Reach\mbox{$\!\!_{{_\aaf}_0}\!$}}
\def\gr{ \mbox{$\tt gr$}}
\def\pr{ \mbox{$\tt pr$}}
\def\co{ \mbox{$\tt co$}}
\def\sta{ \mbox{$\tt st$}}
\def\+{\mbox{+}}
\def\-{\mbox{-}}
\def\<{\langle}
\def\>{\rangle}
\newcommand{\ldot}{\,{\bf .}\,}
\def\nrDef{\ensuremath{\textit{def}}}
\def\nrAcc{\ensuremath{\textit{acc}}}
\newcommand{\Def}{\textsc{Def}}
\newcommand{\Acc}{\textsc{Acc}}
\def\={\mbox{=}}
\def\true{\mbox{$\tt true$}}
\def\a{\mbox{$\tt a$}}
\def\b{\mbox{$\tt b$}}
\def\c{\mbox{$\tt c$}}
\def\Sb{\Set{S}}
\def\sb{\Set{s}}
\def\tb{\Set{t}}
\newcommand{\blue}[1]{{\color{blue}{#1}}}
\begin{document}

\lefttitle{G. Alfano et al.}

\jnlPage{1}{18}
\jnlDoiYr{2024}
\doival{10.1017/xxxxx}

\title[Cyclic Supports in Recursive Bipolar AFs: Semantics and LP Mapping]{Cyclic Supports in Recursive Bipolar Argumentation Frameworks: Semantics and LP Mapping\thanks{We acknowledge the support from project Tech4You (ECS0000009), and PNRR MUR projects FAIR (PE0000013) and SERICS (PE00000014).}}

\begin{authgrp}
\author{\sn{Gianvincenzo} \gn{Alfano}, \sn{Sergio} \gn{Greco}\\ \sn{Francesco} \gn{Parisi},   and \sn{Irina} \gn{Trubitsyna}}
\affiliation{DIMES Department, University of Calabria, Rende, Italy\\
(e-mail: $\{$g.alfano,\ greco,\ fparisi,\ i.trubitsyna$\}$@dimes.unical.it)}
\end{authgrp}

\history{\sub{xx xx xxxx;} \rev{xx xx xxxx;} \acc{xx xx xxxx}}

\maketitle

\begin{abstract}
Dung's Abstract Argumentation Framework (AF) has emerged as a key formalism for argumentation in Artificial Intelligence.  It has been extended in several directions, including the possibility to express supports, leading to the development of the Bipolar Argumentation Framework (BAF), and recursive attacks and supports, resulting in the Recursive BAF (Rec-BAF).  Different interpretations of supports have been proposed, whereas for Rec-BAF (where the target of  attacks and supports may also be attacks and supports) even different semantics for attacks have been defined. However, the semantics of these frameworks have either not been defined in the presence of support cycles, or are often quite intricate in terms of the involved definitions. 
We encompass this limitation and present classical semantics for general BAF and Rec-BAF and show that the semantics for specific BAF and Rec-BAF frameworks can be defined by very simple and intuitive modifications of that defined for the case of AF. 
This is achieved by providing a modular definition of the sets of defeated and acceptable elements for each AF-based framework. We also characterize, in an elegant and uniform way, the semantics of general  BAF and Rec-BAF in terms of logic programming and partial stable model semantics.
\end{abstract}

\begin{keywords}
Abstract argumentation,  argumentation semantics, partial stable models
  \end{keywords}

\section{Introduction}\label{sec:Intro}

Formal argumentation has emerged as one of the important fields in Artificial Intelligence~\cite{Rahwan-Simari09}.
In particular, Dung's abstract Argumentation Framework (AF) is a simple, yet powerful formalism for modelling disputes between two or more agents~\cite{Dung95}.
An AF consists of a set of \emph{arguments} and a binary \emph{attack} relation over the set of arguments that specifies the \textit{interactions} between arguments:
intuitively, if argument $a$ attacks argument $b$, then $b$ is acceptable only if $a$ is not.
Hence, arguments are abstract entities whose role is entirely determined by the interactions specified by the attack relation.

Dung's framework has been extended in many different ways, including the introduction of new kinds of interactions between arguments and/or attacks.
In particular, the class of Bipolar Argumentation Frameworks (BAFs) is an interesting extension of AF which allows for also modelling \textit{supports} between arguments~\cite{Nouioua-Risch11,Villata-Boella-Gabbay-vanDerTorre12}.
Different interpretations of supports have been proposed in the literature. 
Further extensions consider second-order interactions~\cite{Villata-Boella-Gabbay-vanDerTorre12}, e.g., attacks to attacks/supports, as well as more general forms of interactions such as recursive AFs where attacks can be recursively attacked~\cite{BaroniJAR11,CayrolFCL17} and  recursive BAFs, where attacks/supports can be recursively attacked/supported~\cite{Gottifredi-Cohen-Garcia-Simari18}. 
The next example shows two extensions of AF, whereas an overview of the extensions of AF studied in this paper is provided at the end of this section.

\begin{example}\label{ex1:intro}\rm 
Consider a scenario regarding a restaurant and the possible menus to be suggested to customers.
The arguments denote elements which are available and that can occur in a menu, whereas, attacks denote incompatibility among elements.
Consider the scenario shown in Figure \ref{fig:intro} (left), where 
the attacks $\alpha_i$ ($1 \leq i \leq 4$) denote incompatibilities, whereas the support $\beta_1$ from $\tt fish$ to $\tt white$ states that a menu may have white wine only if it also contains fish.
Regarding the attacks, they state that the menu cannot contain both meat and fish as well as both white and red wine.

Assume now there exist two arguments $\tt sorbet$ and $\tt s\overline{orbe}\tt t$ (no sorbet), attacking each-other, stating that the menu may contain or not contain the sorbet.
The resulting framework is shown in Figure \ref{fig:intro} (right). 
The (recursive) attacks $\alpha_7$ and $\alpha_8$ from $\tt sorbet$ to $\alpha_1$ and $\alpha_2$ state that if sorbet is in the menu, then the incompatibility between fish and meat is not valid anymore.~\hfill~$\Box$
\end{example}

\begin{figure}[t!]\centering
\centering
\includegraphics[scale=0.65]{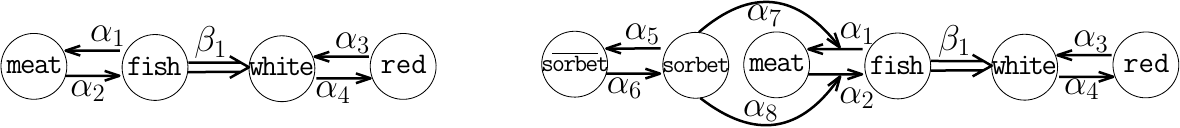}
\caption{\small BAF (left) and Rec-BAF (right) of Example~\ref{ex1:intro}}
\label{fig:intro}
\end{figure}

While the semantics of argumentation is universally accepted, for the frameworks extending AF with supports and recursive attacks/supports, several interpretations of their role have been proposed, giving rise to different semantics \cite{Rahwan-Simari09,CayrolCLS21,CohenGGS14}.
Following Dung's approach, the meaning of recursive AF-based frameworks is still given by relying on the concept of extension.
However, the extensions of an \textit{AF with recursive attacks and supports} also include the (names of) attacks and supports, that intuitively contribute to determine the set of accepted arguments \cite{CayrolFCL17,CayrolFCL18}.

\vspace*{2mm}
\noindent
\textbf{AF-based frameworks.}
Among the different frameworks extending AF some of them share the same structure, although they have different semantics.
Thus, in the following we distinguish between framework and \textit{class} of frameworks.
Two frameworks sharing the same syntax (i.e. the structure) belong to the same (syntactic) class. 
For instance, BAF is a syntactic class, whereas AFN and AFD are two specific frameworks sharing the same BAF syntax; their semantics differ because they interpret supports in different ways.

Figure \ref{fig:overview} overviews the frameworks extending AF studied in this paper. 
Horizontal arrows denote the addition of supports with two different semantics (necessary semantics in the left direction and deductive semantics in the right direction), whereas vertical arrows denote the extension with recursive interactions (i.e., attacks and supports); the two directions denote two different semantics proposed in the literature {for determining the acceptance status of attacks}. 
The frameworks ASAF and RAFN (as well as AFRAD and RAFD), belong to the same class, called \emph{Recursive BAF} (Rec-BAF), combining AFN (resp. AFD) with AFRA and RAF, respectively.
The differences between ASAF and RAFN (resp. AFRAD and RAFD) semantics are not in the way they interpret supports, both based on the necessity (resp. deductive) interpretation, but in a different determination of the status of attacks as they extend AFRA and RAF, respectively.
Clearly, frameworks in the corners are the most general ones. However, for the sake of presentation, before considering the most general frameworks, we also analyze the case of BAFs.

\begin{figure}\label{fig:overview}
\hspace*{-4mm}
\raisebox{3mm}{
    \begin{minipage}[b]{0.35\linewidth}
        \centering
        \includegraphics[width=1.20\textwidth]{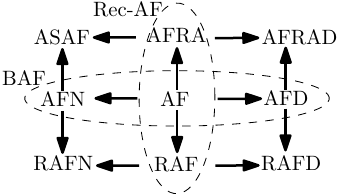}
    \end{minipage}}
    \hspace*{+1mm}
    \raisebox{3mm}{
    \begin{minipage}[b]{0.64\linewidth}
    \footnotesize
        \centering
        Legend of Acronyms:
        \vspace*{-1mm}
        \begin{itemize}
        \itemsep=0pt
        \item[] ~{\bf AF:} abstract Argumentation Framework [Dung,1995]
        \item[] ~{\bf BAF:} Bipolar AF
        		\item[] ~~~{\bf AFN:} AF with Necessities [Nouia and Risch,2011]
        		\item[] ~~~{\bf AFD:} AF with {D}eductive supports [Villata et al.,2012]
        	\item[] ~{\bf Rec-AF:} Recursive-AF
        		\item[] ~~~{\bf AFRA:} AF with Recursive Attacks [Baroni et al.,2011]
        		\item[] ~~~{\bf RAF:} Recursive AF [Cayrol et al.,2017]
        	\item[] ~{\bf Rec-BAF:} Recursive-BAF 
        		\item[] ~~~{\bf ASAF:} Attack-Support AF [Gottifredi et al.,2018]
        		\item[] ~~~{\bf RAFN:} Recursive AF with Necessities [Cayrol et al.,2018]
        		\item[] ~~~{\bf AFRAD:} AF with Rec. Att. and Ded.  supports [Alfano et al.,2020]
        		\item[] ~~~{\bf RAFD:} Recursive AF with Deductive supports [Alfano et al.,2020]
        \end{itemize}
    \end{minipage}}
    \hspace*{-5mm}
    \vspace*{-5mm}
    \centering
    \caption{AF-based frameworks investigated in the paper. Rec-BAF frameworks are in the corners.}\label{fig:overview}
    \vspace*{-5mm}
\end{figure}

So far, the semantics of BAF and Rec-BAF frameworks has been defined only for restricted classes, called \textit{acyclic}.
Although recently two new semantics have been proposed for general AFN and RAFN, these semantics extending the ones defined for acyclic frameworks are quite involved.
Thus, in this paper we propose simple extensions of the semantics defined for acyclic BAF and Rec-BAF that apply to all specific frameworks in these classes (e.g. AFN, AFD, ASAF, RAFN, etc.) and show that, as for the acyclic case, the semantics of a framework can be also computed by considering an `equivalent' logic program under partial stable model semantics.

\vspace*{2mm}
\noindent
\textbf{Contributions.}
The main contributions are as follows:

\vspace*{-2mm}
\begin{itemize}
\item 
We propose classical semantics for general BAF and Rec-BAF and show that the semantics for specific BAF and Rec-BAF frameworks can be defined by very simple and intuitive modifications of that defined for the case of AF;
\item
We show that any general (Rec-)BAF $\Delta$ can be mapped into an equivalent logic program $P_\Delta$ under (partial) stable model semantics;
\item
We investigate the complexity of the verification and credulous and skeptical acceptance problems for (Rec-)BAF under semantics $\S\in\{\tt gr, co, st, pr\}$.  It turns out that BAF and Rec-BAF are as expressive as AF,  though more general relations (i.e., supports and recursive relations) can be easily expressed.
\end{itemize}

\vspace*{-2mm}
\section{Preliminaries}\label{sec:Preliminaries}

\vspace*{-2mm}
\subsection{Argumentation Frameworks}\label{sec:AF}

\vspace*{-1mm}
An abstract \textit{Argumentation Framework} (AF) is a pair \AFNew, 
where $\AFargsNew$ is a set of \textit{arguments} and $\AFattRelNew \subseteq \AFargsNew \times \AFargsNew$ is a set of \textit{attacks}.
An AF can be seen as a directed graph, whose nodes represent arguments and edges represent attacks;  an attack $(a,b) \in \Omega$ from $a$ to $b$ is represented by $a \rightarrow b$.

Given an AF $\Delta=$\AFNew  and a set $\Sb \subseteq \AFargsNew$ of arguments, 
an argument $a \in \AFargsNew$ is said to be
\textit{i})	
\emph{defeated} w.r.t. $\Sb$ iff $\exists b \in \Sb$  such that 
$(b, a) \in  \Omega$, and 
\textit{ii})
\emph{acceptable} w.r.t. $\Sb$ iff for every argument 
$b \in  \AFargsNew$ with $(b, a) \in \Omega$, there is
$c \in \Sb$ such that $(c,b) \in \Omega$.
The sets of defeated and acceptable arguments w.r.t. $\Sb$ (where $\Delta$ is understood) are next defined.

\begin{definition}
For any AF $\Delta = \<A,\Omega\>$ and set of arguments $\Sb \subseteq A$, the set of arguments defeated by $\Sb$ and acceptable \wrt $\Sb$ are defined as follows:\\
\noindent
$\bullet$ $\nrDef(\Sb) = \{ a \in A\ |\ \exists\ b \in \Sb \ldot b\rightarrow a \}$;\\
$\bullet$ $\nrAcc(\Sb)\ = \{ a \in A\ |\ \forall\ b \in A \ldot b\rightarrow a\ \imply\ b \in \nrDef(\Sb) \}$.
\end{definition}

Given an AF \AFNew, a set $\Sb \subseteq \AFargsNew$ of arguments is said to be $i)$
\emph{conflict-free} \iff $\Sb \cap \nrDef(\Sb) = \emptyset$, and $ii)$ \emph{admissible} \iff it is conflict-free and $\Sb \subseteq \nrAcc(\Sb)$.
Moreover,  $\Sb$ 
 is a: \\
$\bullet$ \emph{complete extension} \iff it is conflict-free and $\Set{S} = \nrAcc(\Set{S})$; \\
$\bullet$ \emph{preferred extension} iff it is a $\subseteq$-maximal complete extension;\\
$\bullet$ \emph{stable extension} \iff it is a total (i.e. $\Set{S} \cup \nrDef(\Set{S}) = A$) preferred extension;\\
$\bullet$ \emph{grounded extension} \iff it is the $\subseteq$-smallest complete extension.

The set of complete (resp., preferred, stable, grounded) extensions of a framework $\Delta$ will be denoted by ${\tt co}(\Delta)$ (resp., ${\tt pr}(\Delta)$, ${\tt st}(\Delta)$, ${\tt gr}(\Delta)$).
We often denote extensions by pairs denoting both accepted and defeated elements using the notation $\widehat{\sigma}(\Delta)$ instead of $\sigma(\Delta)$ (with $\sigma \in\tt \{\co,\sta,\pr,\gr\}$), where $\widehat{\sigma}(\Delta) = \{ \widehat{\Sb}=\<\Sb,\nrDef(\Sb)\> \mid \Sb \in \sigma(\Delta) \}$.

\begin{example}\label{ex:prel-af}\rm
Let $\Delta=\< A,\Omega\>$ be an AF where $A=\tt \{ a, b, c, d \}$ and $\Omega=\{\tt (a,b), (b,a), (a,c), (b,c), (c,d),$ $\tt (d,c) \}$. The set of complete extensions of $\Delta$ is  $\co(\Delta)=\tt  \{ \emptyset, \{ d \}, \{ a, d \}, \{ b, d \} \}$. \ 
Consequently,  \ 
$\pr(\Delta) = \sta(\Delta) =\tt  \{ \{ a, d \}, \{ b, d \} \}$, $\gr(\Delta)=\tt  \{ \emptyset \}$.~\hfill~$\square$
\end{example}

Two main problems in formal argumentation are \textit{verification} and  \textit{acceptance}~\cite{DvorakGRW23,FazzingaFF22,AlfanoGPT22ijcai,ALFANO2023103967,AlfanoGPT23aamas,ALFANO2023757,AlfanoGPT23ijcai,ALFANO2024aij-weak,AlfanoGPT24aaai,DvorakH0SSW22,DvorakKUW24}.
Given an AF $\Delta=\<A,\Omega\>$, a set $S\subseteq A$ of arguments,  and a semantics $\sigma\in\{\gr,\co,\sta,\pr\}$, the \textit{verification} problem 
is the problem of deciding  $S$ is a $\S$-extension of $\Delta$ (i.e., $S\in\S(\Delta)$).
Given an AF $\Delta=\<A,\Omega\>$, an argument $a\in A$, and a semantics $\sigma\in\{\gr,\co,\sta,\pr\}$, the \textit{credulous (\mbox{resp.} skeptical) acceptance} problem
is the problem of deciding whether argument $a$ is credulously (resp. skeptically) accepted {under semantics \S}, that is, deciding whether $a$ belongs to at least one (resp. all) $\sigma$-extension of the framework $\Delta$. 
Clearly, for the grounded semantics, which admits exactly one extension, these problems become identical.
The complexity of acceptance and  verification problems for AF
has been thoroughly investigated (see~\cite{DvorakD17} for a survey).
The semantics of an AF $\Delta$ can be also computed by considering the logic program $P_\Delta$, under partial stable model (PSM) semantics, derived from $\Delta = \<A,\Omega\>$ as follows:
\begin{center}
\hspace{3cm}
$P_{\Delta} = \left\{ a \leftarrow \bigwedge_{(b,a) \in \Omega} \neg b \mid a \in A \right\}_{.}$ 
\hspace{3cm} $(1)$
\end{center}

Partial stable models are 3-valued. 
A partial interpretation $M$ is a partial stable model of a ground program $P$ if it is the minimal model of the program $P^M$ obtained from replacing negated literals in rules' body with their truth values in $M$. 
The set of partial stable models of a program $P$ is denoted as ${\cal PS}(P)$.
It has been shown that for any AF $\Delta$, $\widehat{\co}(\Delta) = {\cal PS}(P_\Delta)$.
More information on partial stable model semantics can be found in Appendix A.

\vspace*{-3mm}
\subsection{Bipolar Argumentation Frameworks}

\vspace*{-1mm}
A \textit{Bipolar Argumentation Framework} (BAF) is a triple $\< A, \Omega, \Gamma\>$, where $A$ is a set of \textit{arguments}, $\Omega \subseteq A \times A$ is a set of \textit{attacks}, and $\Gamma \subseteq A \times A$ is a set of \textit{supports}.
A BAF can be  represented by a directed graph with two types of edges: \emph{attacks} and \emph{supports}, denoted by $\rightarrow$ and $\Rightarrow$, respectively. 
A \emph{support path} $a_0 \supportplus a_n$ from argument $a_0$ to argument $a_n$ is a  sequence of $n$ edges $a_{i-1} \Rightarrow a_i$ with $0 < i \leq n$.
We use $\Gamma^+ =$ $\{ (a,b) \ |$ $a,b \in A \ \wedge a \supportplus b \}$ to denote the set of pairs $(a,b)$ such that there exists a support path from $a$ to $b$.
A BAF is said to be acyclic if it does not have support cycles, i.e. there is no argument $a$ such that $a \supportplus a$.

Different interpretations of the support relation have been proposed in the literature \cite{Rahwan-Simari09,CayrolCLS21}. 
In this paper we concentrate on the necessity and deductive interpretations \cite{Nouioua-Risch11,Villata-Boella-Gabbay-vanDerTorre12}.
Necessary support is intended to capture the following intuition: if argument $a$ supports argument $b$ then the acceptance of $b$ implies the acceptance of $a$ and the non-acceptance of $a$ implies the non-acceptance of $b$, whereas the deductive interpretation states that the acceptance of $a$ implies that $b$ is also accepted.

The two BAF frameworks obtained by interpreting the supports as necessities or as deductive are respectively called  Argumentation Framework with Necessities (AFN) and Argumentation Framework with Deductive Supports (AFD).

It is worth noting that for AFN the following implication hold: 
$(i)$ $a \rightarrow b$ and $b \supportplus c$ imply $a \rightarrow c$ (called \emph{secondary or derived attack}). 
Similarly, for AFD the following implication hold: 
$(ii)$ $a \rightarrow b$ and $c \supportplus b$ imply $a \rightarrow c$  (called \emph{mediated or derived attack}). 
Considering Example~\ref{ex1:intro}, under the necessary interpretation of supports, there exist a secondary attack from $\tt meat$ to $\tt white$.
	
There has been a long debate on the fact that the support relation should be acyclic as it is inherently transitive.
On the other side, a framework represents a situation where several agents may act and, thus, support cycles could be present.
In this subsection we assume that the support relation is acyclic, as originally defined in \cite{Nouioua-Risch11}. 
A semantics for the general AFN, albeit a bit involved, has been recently proposed in \cite{NouiouaB23}.

The aim of this paper, explored in Section~\ref{sec:RBAF}, is the definition of intuitive semantics for general Recursive BAF, that  extend both semantics defined for BAF and acyclic BAF.
To this end, in the rest of this subsection we discuss semantics for acyclic BAF, whereas
in the Section~\ref{sec:BAF} we present an intuitive semantics for general BAF.
For BAF with supports interpreted as necessities (AFN) the new semantics is equivalent to that proposed in \cite{NouiouaB23}.

\vspace*{1mm}
\noindent
{\bf AF with Necessary Supports (AFN).}
The semantics of AFN with acyclic supports can be defined by first redefining the definition of defeated and acceptable sets as follows.

\vspace*{-1mm}
\begin{definition}\label{def:AFN}
For any AFN $\<A, \Omega, \Gamma\>$ and set of arguments $\Sb \subseteq A$: \\
$\bullet$ 
$\nrDef(\Sb) = \{ a \in A\ |\ (\exists b \in \Sb \ldot b\rightarrow a)\ \vee$
$(\exists c \in \nrDef(\Sb) \ldot  c \Rightarrow a  \}$; \\
$\bullet$
$\nrAcc(\Sb) = \{ a \in A\ |\ (\forall b\! \in\! A \ldot  b\rightarrow a\ \imply\ b\! \in\! \nrDef(\Sb)) \wedge (\forall c\in A \ldot  c\Rightarrow a\ \imply\  c\! \in\! \nrAcc(\Sb) \}$.
\end{definition}

It is worth noting that $\nrDef(\Sb)$ and $\nrAcc(\Sb)$ are defined recursively.
The definitions of conflict-free and admissible sets, as well as the definitions of complete, preferred, stable and grounded extensions are the same of those introduced for AF.

\vspace*{-0mm}
\begin{example}\label{ex:BAF2}\rm
Let $\Delta=\tt \< \{ \tt fish, meat, white, red\},$ 
$\tt \{\tt (fish,meat),$ $\tt (meat,fish),$ $\tt  (white,red),$ $\tt (red,white)\},$ $\{\tt (fish,white)\} \>$ be  the  AFN shown on Figure~\ref{fig:intro}(left). 
Then, $\nrDef(\{\tt fish,white\})$ $= \{ \tt meat,red \}$ and $\nrAcc(\{\tt fish,white\}) =$ $ \{ \tt fish,white \}$. 
$\Delta$  has six complete extensions, that are:  
$E_0 = \emptyset$,  
$E_1=\{\tt red\}$, 
$E_2=\{\tt fish\}$, 
$E_3=\{\tt fish, red\}$, 
$E_4=\{\tt fish, white\}$, 
$E_5=\{\tt meat,red\}$.
Moreover, $E_3$, $E_4$, and $E_5$ are preferred (and also stable), while $E_0$ is the grounded extension.~\hfill~$\square$
\end{example}

\vspace*{-0mm}
An alternative and equivalent way to define the semantics of acyclic AFN is to rewrite them in terms of `equivalent' AF, by introducing secondary attacks and deleting supports. 
The derived AF of the AFN 
$\<A, \Omega, \Gamma\>$ of Example~\ref{ex:BAF2} is 
$\<A,\Omega\cup\{\tt (meat,white)\}\>$.

\vspace*{2mm}
\noindent
{\bf AF with Deductive Supports (AFD).}
The semantics of AFD  is dual w.r.t. that of AFN, that is we can transform a BAF with deductive supports into an equivalent BAF with necessary supports by simply reversing the direction of the support arrows~\cite{CayrolCLS21}.  Equivalently,  the semantics of acyclic AFD can be defined in terms of AF by adding {mediated attacks} and removing the supports.  Alternatively,  it can be presented as in Definition~\ref{def:AFN} by replacing the support $c\Rightarrow a$ with the support $a\Rightarrow c$.

\subsection{Recursive BAF}\label{sec:ASAFpreliminary}

\vspace*{-1mm}
By combining the concepts of bipolarity and recursive interactions, more general argumentation frameworks have been defined.
A \textit{Recursive Bipolar Argumentation Framework (Rec-BAF)} is a tuple $\< A, R, T, {\bf s}, {\bf t} \>$, where $A$ is a set of arguments, $R$ is a set of attack names, $T$ is a set of support names, ${\bf s}$ (resp., ${\bf t}$) is a function from $R\ \cup\ T$ to $A$ (resp., to $A\ \cup\ R\ \cup\ T$), that is mapping each attack/support to its source (resp., target). 

Considering AF with recursive attacks, i.e. AF in which the target of an attack can also be an attack, two different semantics have been defined, giving rise to two specific frameworks:
Recursive Abstract Argumentation Framework (RAF) \cite{CayrolFCL17} and Abstract Argumentation Framework with Recursive Attacks (AFRA) \cite{BaroniJAR11}.
We do no further discuss these two frameworks as they are just special cases of Rec-BAF. 
We mention them only for the fact that they were proposed before  Rec-BAF frameworks, and the differences in their semantics, combined with different interpretation of supports, give rise to different specific Rec-BAF frameworks, namely the ones appearing at the corner of Figure \ref{fig:overview}.
We first discuss the two frameworks where supports are interpreted as necessities:
Recursive Argumentation Framework with Necessities (RAFN) (extending RAF) \cite{CayrolFCL18}, and Attack Support Argumentation Framework (ASAF) (extending AFRA) \cite{Gottifredi-Cohen-Garcia-Simari18}.
Regarding the extensions of RAF and AFRA with deductive supports, called  Recursive Argumentation Framework with Deductive Supports (RAFD), and  Argumentation Framework with Recursive Attacks-Supports (AFRAD) \cite{Alfano-TPLP}, their semantics will be recalled at the end of this section. 

We now discuss semantics for  `acyclic' Rec-BAF, although recently there have been two contributions defining the semantic of general RAF and RAFN~\cite{LagasquieSchiex23}.
We do not further discuss these semantics as they are quite involved and in the next two sections we present our main contribution, consisting in the definition of new semantics for general BAF and Rec-BAF, extending in a natural way the semantics defined for acyclic BAF and acyclic Rec-BAF.
As the underlying structure representing a Rec-BAF is not a graph, the definition of (a)cyclicity has been formulated in terms of acyclicity of the BAF obtained by replacing every attack $a \rightarrow^{\!\!\!\!\!\alpha} b$ with $a \Rightarrow \alpha$ and $\alpha \rightarrow b$, and  every support $a \Rightarrow^{\!\!\!\!\!\!\beta} b$ with $a \Rightarrow {\beta}$ and ${\beta} \Rightarrow b$.
Note that the so-obtained auxiliary BAF is only used to formally define and check acyclicity in Rec-BAF, not to provide the semantics which are instead recalled next.

\vspace*{1mm}
\noindent 
{\bf Recursive AF with Necessities (RAFN).} 
The RAFN framework 
has been proposed in~\cite{CayrolFCL18}. 
The semantics combines the RAF interpretation of attacks in RAF with the necessity interpretation of supports of AFN. 
Here we consider a simplified version where supports have a single source and the support relation is acyclic.
Differences between RAFN and ASAF semantics are highlighted in blue.

\begin{definition}\label{def:RAFN-sem}
For any acyclic RAFN $\< A, \Sigma, T, {\bf s}, {\bf t} \>$ and set $\Sb \subseteq A \cup R \cup T$, we have that: 

\vspace*{1mm}
\noindent
$\bullet\ {\nrDef}(\Sb) = \{ X \in A \cup R \cup T\ |\ 
(\exists \alpha \in R \cap \Sb \ldot  \tb(\alpha)=X \wedge \blue{\sb(\alpha) \in \Sb}) \ \vee \\ 
\hspace*{45mm}\! (\exists~\beta~\in~T \cap~\Sb \ldot \tb(\beta)=X \wedge {\sb~(\beta) \in \nrDef(\Sb)})\ \}$;

\noindent
$\bullet\ \nrAcc(\Sb)\! =\!\{ X\! \in A\cup R\cup T\ |\ 
(\forall \alpha\! \in\! R  \ldot \tb(\alpha)=X\ \imply\ (\alpha \in \nrDef(\Sb) \vee \blue{{\bf s}(\alpha) \in\! \nrDef(\Sb)}))\wedge \\
\hspace*{47mm} (\forall \beta\! \in\! T \ldot \tb(\beta)=X\!\ \imply\     (\beta \in \nrDef(\Sb) \vee {\bf s}(\beta) \in \nrAcc(\Sb)))\ \}$.
\end{definition}

\vspace*{2mm}
\noindent
{\bf Attack-Support AF (ASAF).}
The semantics combines the AFRA interpretation of attacks with that of BAF under the necessary interpretation of supports (i.e., AFN). For the sake of presentation, we refer  to the formulation presented in \cite{ecai2020,AlfanoCGGPS24}, where attack and support names are first-class citizens, giving the possibility to represent multiple attacks and supports from the same source to the same target. 
For any ASAF $\Delta$ and $\Sb\subseteq A \cup R \cup T$, the \emph{defeated} and \emph{acceptable} sets (given $\Sb$) are defined as follows.

\begin{definition}\label{def:ASAF-sem}
Given an acyclic ASAF $\< A, R, T, {\bf s}, {\bf t} \>$ and a set $\Sb \subseteq A \cup R \cup T$, we define: 

\vspace*{1mm}
\noindent
$\bullet\ {\nrDef}(\Sb) = \{ X \in A \cup R \cup T\ |\ \blue{(X \in R \wedge \sb(X) \in \nrDef(\Sb))\  \vee}\\
\hspace*{44mm} {(\exists \alpha \in R \cap \Sb \ldot 
	\tb(\alpha)=X)}\ \vee \\  
\hspace*{44mm} (\exists \beta \in T \cap \Sb \ldot 
	\tb(\beta)=X\ \wedge\ \sb(\beta) \in \nrDef(\Sb))\}$; 
	
\vspace*{1mm}
\noindent
$\bullet\ \nrAcc(\Sb) = \{ X \!\in\! A\cup R\cup T\ |\ \blue{(X\in R\ \imply\ \sb(X) \in \nrAcc(\Sb))}\wedge\\
\hspace*{44mm} (\forall \alpha \in R\!\ldot{\bf t}(\alpha)=X\ \imply\ \alpha\in\nrDef(\Sb)) \wedge \\
\hspace*{44mm} (\forall \beta\in T \ldot  \tb(\beta)=X\ \imply\ (\beta\in \nrDef(\Sb) \vee \sb(\beta)\in \nrAcc(\Sb)))\}$.
\end{definition}

\vspace*{1mm}
Again, the notions of \emph{conflict-free}, \emph{admissible sets}, and the different types of extensions can be defined in a standard way (see Section \ref{sec:AF}) by considering $\Sb\subseteq A \cup R \cup T$
and by using the definitions of defeated and acceptable sets reported above.
It is worth noting that, the differences between ASAF and RAFN semantics (highlighted in blue) are not in the way they interpret supports (both based on the necessity interpretation), but in a different determination of the status of attacks as they extend AFRA and RAF, respectively. Moreover, for each semantics, the RAFN extensions can be derived from the corresponding ASAF extensions and vice versa.

\begin{example}\label{ex:prel-afras}\rm
Consider the Rec-BAF $\Delta$ with necessary supports of Figure~\ref{fig:intro}(right) and assume arguments are denoted by their initials.   
The preferred (and also stable) extensions prescribing sorbet in the menu under RAFN (resp., ASAF) semantics are: 
$E_0=\{\tt s,m,f,w,$ $\beta_1,$ $\alpha_3,$ $\dots,$ $\alpha_8\}$  and 
$E_1=\{\tt s,m,f,r,$ $\beta_1,$ $\alpha_3,$ $\dots,$ $\alpha_8\},$  
(resp.,  $E_0'=\{\tt s,m,f,w,$ $\beta_1,$ $\alpha_4,$ $\alpha_6,$ $\alpha_7,$ $\alpha_8\}$  and $E_1'=\{\tt s,m,f,r,$ $\beta_1,$ $\alpha_3,$ $\alpha_6,$ $\alpha_7,$ $\alpha_8\}$).
Note that, under ASAF semantics, attacks $\tt \alpha_3$ (resp., $\tt \alpha_4$) and $\alpha_6$ are not part of the extension $E_0'$ (resp., $E_1'$) as their sources (i.e., $\tt r$,  $\tt w$,  and $\tt \bar{s}$,  respectively) are defeated.~\hfill~$\square$
\end{example}

\vspace*{0mm}
\noindent
{\bf Recursive AF with Deductive Supports (RAFD).}
For BAFs,  necessary support and deductive support are dual	
(\ie it is possible to transform a BAF with necessity into an equivalent BAF with deductive supports by simply reversing the direction of the support arrows)~\cite{CayrolCLS21}. 
However, in the case of Rec-BAF that are not BAFs, this duality no longer holds. 
This happens because the target of supports and attacks in Rec-BAF may also be other supports and attacks.
For this reason,  we next explicitly recall the semantics for Rec-BAF with deductive supports.

\begin{definition}
For any acyclic RAFD $\< A, R, T, {\bf s}, {\bf t} \>$ and set $\Sb \subseteq A \cup R \cup T$, we have that: 

\vspace*{1mm}
\noindent
$\bullet\ {\nrDef}(\Sb) = \{ X \in A \cup R \cup T\ |\
(\exists \alpha \in R \cap \Sb \ldot  \tb(\alpha)=X \wedge \blue{\sb(\alpha) \in \Sb}) \ \vee \\ 
\hspace*{45mm} (\exists~\beta~\in~T \cap~\Sb \ldot \blue{\sb}(\beta)=X \wedge {\blue{\tb}(\beta) \in \nrDef(\Sb)})\ \}$;

\vspace*{1mm}
\noindent
$\bullet\ \nrAcc(\Sb)\! =\!\{ X \in A\cup R\cup T\ |\  
(\forall \alpha\! \in\! R  \ldot \tb(\alpha)=X\ \imply\ (\alpha \in \nrDef(\Sb) \vee {{\bf s}(\alpha) \in\! \nrDef(\Sb)}))\wedge \\
\hspace*{45mm} (\forall \beta\! \in\! T \ldot \blue{\sb}(\beta)=X\!\ \imply\     (\beta \in \nrDef(\Sb) \vee \blue{\bf t}(\beta) \in \nrAcc(\Sb)))\ \}$.
\end{definition}

We have highlighted in blue the differences between the RAFN and the RAFD definitions of defeated and acceptable arguments.

\vspace*{2mm}
\noindent 
{\bf AF with Recursive Attacks and Deductive Supports (AFRAD).}

\begin{definition}
Given an acyclic AFRAD $\< A, R, T, {\bf s}, {\bf t} \>$ and a set $\Set{S} \subseteq A \cup R \cup T$, we define: 

\vspace*{1mm}
\noindent
$\bullet\ {\nrDef}(\Sb) = \{ X \in A \cup R \cup T\ |\ {(X \in R \wedge \sb(X) \in \nrDef(\Sb))\  \vee}\\
\hspace*{44mm}  {(\exists \alpha \in R \cap \Sb \ldot 
	\tb(\alpha)=X)}\ \vee\  \\  
\hspace*{44mm} (\exists \beta \in T \cap \Sb \ldot 
	\blue{\sb}(\beta)=X\ \wedge\ \blue{\tb}(\beta) \in \nrDef(\Sb))\}$; 
	
\noindent
$\bullet\ \nrAcc(\Sb)\! =\!\{\! X \!\in\! A\cup R\cup T\ |\ {(X\in R\ \imply\ \sb(X) \in \nrAcc(\Sb))}\wedge \\
\hspace*{42mm} (\forall \alpha \in R\!\ldot{\bf t}(\alpha)=X\ \imply\ \alpha\in\nrDef(\Sb)) \wedge \\
\hspace*{42mm} (\forall \beta\in T \ldot  \blue{\sb}(\beta)=X\ \imply\ (\beta\in \nrDef(\Sb) \vee \blue{\tb}(\beta)\in \nrAcc(\Sb)))\}$.
\end{definition}

Again, we have highlighted in blue the differences between the ASAF and the AFRAD definitions of defeated and acceptable arguments.
Analogously to the case of RAFN vs ASAF, RAFD and AFRAD semantics may differ only in the status of attacks.

\paragraph{Mappings to other formalisms.}

It has been shown that any acyclic Rec-BAF $\Delta$ can be mapped to 
i) an `equivalent' AF $\Lambda$ so that $\co(\Delta) \equiv \co(\Lambda)$ (i.e.   $\co(\Delta) = \co(\Lambda)$ modulo meta-arguments introduced in the rewriting), and 
ii) a logic program $P_\Delta$ so that $\widehat{\co}(\Delta) = {\cal PS}(P_\Delta)$~\cite{Alfano-TPLP}.
We next extend these results to general Rec-BAF.

\def\AA{\mathbb{A}}
\def\RR{\mathbb{R}}
\def\TT{\mathbb{T}}

\section{A new semantics for BAF}\label{sec:BAF}

In this section we present a new and intuitive semantics for general BAF.
In the rest of this section, whenever we refer to a BAF we intend either an AFN (i.e. a BAF where supports are intended as necessities) or an AFD (where supports are intended as deductive).

We start by introducing new definitions for defeated and acceptable sets that extend the ones recalled in the previous section.
To distinguish the defeated and acceptable sets defined for general BAF
from those defined 
for acyclic BAF, 
we introduce new functions $\Def$ and $\Acc$.

\begin{definition}\label{def:BAF-Sem}
\noindent
For any general AFN $\<A, \Omega, \Gamma\>$ and set of arguments $\Sb \subseteq A$: \\
$\bullet$ $\Def(\Sb) = \{ a \in A\ |\ (\exists b \in \Sb \ldot b\rightarrow a)\ \vee$
$(\exists c \in \Def(\Sb) \ldot  c \Rightarrow a) \vee
\blue{a\supportplus a}\}$;\\
$\bullet$ $\Acc(\Sb) \= \{ a \in A | (\forall b\! \in\! A \ldot  b\rightarrow a\ \imply\ b\! \in\! \Def(\Sb)) \wedge (\forall c\in A \ldot  c\Rightarrow a\ \imply\  c\! \in\! \Acc(\Sb) \}$.
\end{definition}

As for the acyclic case,  the semantics of general AFD is dual w.r.t. that of general AFN.  Thus,  we can transform any AFD into an equivalent AFN by reversing supports.

The main difference between the definitions of defeated and acceptable sets for acyclic and general BAF (highlighted in blue) consists in the fact that the new definition of acceptable set $\Acc$ explicitly excludes self-supported arguments (i.e., arguments in a support cycle or supported by an argument in a support cycle), that have been assumed to be defeated.
This means that self-supported arguments are always defeated, independently from the specific set $\Sb$.  Thus, to compute the BAF semantics, it is possible to first state that all arguments belonging to some support cycle are defeated (i.e. they are false arguments), and then, after removing them from the framework, we can compute the semantics of a support-acyclic BAF.  
Intuitively, this is in line with the self-supportless principle in logic programs that ensures no literal in an answer set is supported exclusively by itself or through a cycle of dependencies that lead back to itself.   
As an example, consider the BAF $\<\{a\}, \emptyset, \{ (a,a)\}\>$. 
According to our semantics,  argument $a$ is always defeated.
Any alternative semantics prescribing $a$ as true would be in contrast with the logic formulation of the AF,  
that according to our semantics, is rewritten as a rule $a \leftarrow a$ (literally, $a$ if $a$),  whose unique minimal model is $\emptyset$ (where $a$ is false). 
Notably, as discussed in more detail in Section~\ref{sec:conlusion},
this intuition is also reflected in the stable semantics of Abstract Dialectical Framework~\cite{Brewka-Strass-Ellmauthaler-Wallner-Woltran13-sh}. 
We now give another example to illustrate our semantics.

\begin{figure}[t!]
\centering
\includegraphics[scale=0.55]{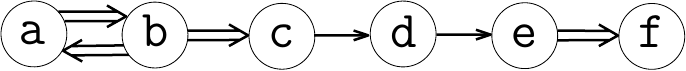}
\caption{\small AFN $\Delta$ of Example~\ref{ex:baf-new-sem}.}\label{fig:AFN}
\end{figure}

\begin{example}\label{ex:baf-new-sem}\rm
Let $\Delta=\tt \< \{\tt a,b,c,d,e,f\},$ 
$\tt \{\tt (c,d), (d,e) \},  \tt \{ (\a,\b),$ $\tt (\b,a),$ $\tt (\b,\c),(\tt e,f)\ \} \>$ be  the  AFN shown on Figure~\ref{fig:AFN}. 
Then, $\Def(\{\tt d\}) $ $=$ $ \{ \tt a,b,c, e,f\}$ and $\Acc(\{\tt d\}) =$ $ \{\tt d\}$. 
The AFN $\Delta$ has a unique complete extension $E = \{\tt d\}$ that is grounded,  preferred and stable.~\hfill~$\square$
\end{example}

As stated next, the semantics introduced extend those defined for acyclic BAF.

\begin{proposition}\label{prop:BAF-generalize}
For any acyclic AFN $\Delta=\<A, \Omega, \Gamma\>$ and semantics $\sigma \in\tt \{ \co, \gr, \pr, \sta \}$,  $\sigma(\<A, \Omega, \Gamma\>)$ computed by means of Definition~\ref{def:BAF-Sem} coincides with $\sigma(\Delta)$ computed by means of Definition~\ref{def:AFN}. 
\end{proposition}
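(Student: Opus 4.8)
The plan is to reduce the statement about extensions to a pointwise equality between the two pairs of operators — $(\nrDef,\nrAcc)$ from Definition~\ref{def:AFN} and $(\Def,\Acc)$ from Definition~\ref{def:BAF-Sem} — and then lift that equality through the (identical) definitions of the extensions. The single syntactic difference between the two definitions is the extra disjunct $a \supportplus a$ in the defeated set $\Def$. The first and decisive observation is that, by the very definition of acyclicity of a BAF, there is no argument $a$ with $a \supportplus a$; hence whenever $\Delta=\langle A,\Omega,\Gamma\rangle$ is acyclic this disjunct is vacuously false for every $a \in A$.

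First I would fix an arbitrary $\Sb \subseteq A$ and prove $\Def(\Sb)=\nrDef(\Sb)$. Reading the right-hand sides of the two defeated-set equations as operators on $2^{A}$ — the unknown defeated set occurs only positively, through the conditions $c \in \Def(\Sb)$ and $c \in \nrDef(\Sb)$ — both are monotone, and once the disjunct $a\supportplus a$ is discarded they become the \emph{identical} function of the unknown set. Consequently they share the same least fixpoint, giving $\Def(\Sb)=\nrDef(\Sb)$.

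Next I would prove $\Acc(\Sb)=\nrAcc(\Sb)$. The defining equations for $\Acc$ and $\nrAcc$ have exactly the same logical form and differ only in that the former refers to $\Def(\Sb)$ where the latter refers to $\nrDef(\Sb)$; having already established $\Def(\Sb)=\nrDef(\Sb)$, the two acceptability operators are literally the same. Because the support relation is acyclic, this recursion is well-founded — membership of $a$ in $\Acc(\Sb)$ can be decided by induction along the finite support DAG, since $c \Rightarrow a$ makes $c$ a support-predecessor of $a$ — so it has a unique solution, and hence $\Acc(\Sb)=\nrAcc(\Sb)$.

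Finally I would conclude. Every notion used to define the extensions — conflict-free, admissible, and the complete, preferred, stable, and grounded extensions — is phrased solely in terms of the pair $(\nrDef,\nrAcc)$ under Definition~\ref{def:AFN} and of $(\Def,\Acc)$ under Definition~\ref{def:BAF-Sem}, with identical logical shape. Since these pairs agree on every $\Sb \subseteq A$, the induced families of conflict-free, admissible, and $\sigma$-extensions agree as well, yielding $\sigma(\langle A,\Omega,\Gamma\rangle)$ under Definition~\ref{def:BAF-Sem} $=\sigma(\Delta)$ under Definition~\ref{def:AFN} for each $\sigma \in \{\co,\gr,\pr,\sta\}$. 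The only step requiring genuine care is the fixpoint argument: one must justify that defining operators which coincide as functions produce the same solution, and that in the acyclic case the acceptability recursion is well-founded so that ``the'' solution is unambiguous. I expect this to be the main, though modest, obstacle; the remainder is routine bookkeeping over the definitions.
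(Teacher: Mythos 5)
Your proof is correct and follows essentially the same route as the paper's: both rest on the single observation that acyclicity makes the extra disjunct $a\supportplus a$ vacuous, so $\Def(\Sb)=\nrDef(\Sb)$ and $\Acc(\Sb)=\nrAcc(\Sb)$ for every $\Sb\subseteq A$, and the extensions then coincide because they are defined identically from these operators. Your additional care about the recursive definitions sharing the same (least/well-founded) fixpoint is a reasonable elaboration of a point the paper leaves implicit, but it is not a different argument.
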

\vspace*{-4mm}
\begin{proof}
The result follows by observing that,  for any set $\Sb \subseteq A $, we have that ${\Def}(\Sb) ={def}(\Sb) $ and ${\Acc}(\Sb) ={acc}(\Sb) $ as there is no argument $a\in A$ s.t.  $a\supportplus a$. 
\end{proof}

As AFD semantics is dual to the on of AFN, an analogous result holds for acyclic AFD.

\subsection{LP mapping}

As done in \cite{Alfano-TPLP} for acyclic BAF, the semantics here presented can be defined even in terms of logic programs under partial stable model semantics.

To this end, we now provide the mappings from general BAF to propositional programs, so that the set of $\sigma$-extensions of any general BAF $\Delta$ is equivalent to that of partial stable models of the corresponding logic program $P_\Delta$. 
The logic rules of $P_\Delta$ are derived from the topology of the AF. 
Basically, the rules in $P_\Delta$ extend the ones defined for AF (Equation $(1)$) as the body of a rule defining an argument $a$ also contains the (positive) conjunction of arguments supporting it.

\begin{definition}\label{def:BAFn-program}
Given an AFN (resp. AFD) $\Delta = \<A, \Omega, \Gamma\>$, then $P_\Delta$ (the propositional program derived from $\Delta$) contains, for each  argument $a \in A$, a rule 
\vspace*{-1mm}
\begin{center}
$
a \leftarrow \bigwedge_{(b,a) \in \Omega} \neg b\ \wedge\  
\bigwedge_{\blue{(c,a)} \in \Gamma} c \hspace*{14mm}
\left(\mbox{resp. } a \leftarrow \bigwedge_{(b,a) \in \Omega} \neg b\ \wedge\  
\bigwedge_{\blue{(a,c)} \in \Gamma} c \right). \ \ \ \ \ \ (2)
$
\end{center}
\end{definition}

Again, we have highlighted in blue the difference between the two mappings.
The next theorem states the equivalence between (general) BAF $\Delta$ and logic program $P_\Delta$ under partial stable model semantics.
As already stated, whenever we use the term BAF we intend either AFN or AFD.

\begin{thm}\label{BAFn:semantics-equivalence}
For any BAF $\Delta$, \ \ $\widehat{\co}(\Delta) = {\cal PS}(P_{\Delta} )$.
\end{thm}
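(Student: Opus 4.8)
The plan is to reduce the general (cyclic) case to the support-acyclic case, for which the equivalence $\widehat{\co}(\Delta)={\cal PS}(P_\Delta)$ is already known~\cite{Alfano-TPLP}, following the intuition described after Definition~\ref{def:BAF-Sem}: first isolate the self-supported arguments, which are defeated in every extension and false in every partial stable model, and then transfer the known result on the acyclic remainder. Concretely, for an AFN $\Delta=\<A,\Omega,\Gamma\>$ let $C=\{a\in A \mid \exists a'\ldot a'\supportplus a' \wedge (a'=a \vee a'\supportplus a)\}$ be the set of arguments lying on a support cycle or reachable from one through supports, and let $\Delta'$ be the AFN obtained by deleting $C$ together with every interaction incident to it. I would first record that $C$ is closed under support successors (if $c\in C$ and $c\Rightarrow a$ then $a\in C$), so that no argument of $A\setminus C$ is supported by an argument of $C$; hence $\Delta'$ is genuinely support-acyclic and $P_{\Delta'}$ is exactly $P_\Delta$ after discarding the rules of the $C$-atoms and erasing those atoms from the remaining bodies.

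The argument then rests on two matching facts. On the semantics side I would show $C\subseteq\Def(\Sb)$ for every $\Sb$: cycle members are defeated by the clause $a\supportplus a$ in Definition~\ref{def:BAF-Sem}, and their support-descendants by the clause $\exists c\in\Def(\Sb)\ldot c\Rightarrow a$. Hence every $\co$-extension $\Sb$ of $\Delta$ is disjoint from $C$ by conflict-freeness, and, once restricted to $A\setminus C$, satisfies exactly the fixpoint conditions defining a $\co$-extension of $\Delta'$ (the $C$-attackers of surviving arguments being automatically defeated, and their supporters all lying in $A\setminus C$). This yields a bijection $\Sb\mapsto\Sb$ between $\co(\Delta)$ and $\co(\Delta')$ under which the defeated sets differ exactly by the constant $C$; and on the acyclic $\Delta'$ the operators $\Def,\Acc$ coincide with $\nrDef,\nrAcc$ by Proposition~\ref{prop:BAF-generalize}. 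On the program side I would prove the companion fact that in every $M\in{\cal PS}(P_\Delta)$ each atom of $C$ is false: since the support literals occur positively in the bodies of $(2)$, the atoms of $C$ form a conjunctive positive cyclic dependency with no external grounding, and the three-valued least model of the reduct $P_\Delta^M$, which $M$ must equal, assigns $0$ to every such atom — precisely the self-supportless principle invoked in the text. Erasing these false atoms turns $P_\Delta^M$ into $P_{\Delta'}^{M'}$, giving a bijection between ${\cal PS}(P_\Delta)$ and ${\cal PS}(P_{\Delta'})$ under which the $C$-atoms are uniformly false.

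Combining the two facts, both $\widehat{\co}(\Delta)$ and ${\cal PS}(P_\Delta)$ are obtained from their $\Delta'$-counterparts by declaring the same set $C$ false, so the acyclic equivalence $\widehat{\co}(\Delta')={\cal PS}(P_{\Delta'})$ lifts to $\Delta$. The AFD case reduces to this: reversing the support arrows turns an AFD into an equivalent AFN (as already noted for the general duality), and simultaneously rewrites the AFD rule of $(2)$, whose positive conjuncts range over $(a,c)\in\Gamma$, into the AFN rule, whose conjuncts range over $(c,a)\in\Gamma$; both $\widehat{\co}$ and ${\cal PS}(P_\cdot)$ are thereby preserved, so the AFN equivalence transfers. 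I expect the program-side lemma to be the main obstacle: arguing rigorously that a conjunctive positive support cycle is forced to $0$ in the least model of the reduct even when its atoms also carry undefined attackers or supporters, and verifying that erasing the $C$-atoms commutes with forming the reduct, so that the bijection between the two families of partial stable models is exact.
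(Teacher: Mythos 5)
Your proof is correct, but it takes a genuinely different route from the paper's. The paper argues the two inclusions directly: for $\widehat{\co}(\Delta)\subseteq{\cal PS}(P_\Delta)$ it observes that the self-supported arguments (its set $\ssup(\Delta)$, which is exactly your $C$) never occur in a complete extension $\Sb$, so the reduct $P_\Delta^{\widehat{\Sb}}$ has no positive cycles and $T^\omega_{P_\Delta^{\widehat{\Sb}}}(\emptyset)=\Sb$; for the converse it takes a partial stable model $M$, shows $pos(M)$ is conflict-free (checking direct, supported and mediated attacks), shows $pos(M)\cap\ssup(\Delta)=\emptyset$ by minimality of the model, and concludes $pos(M)=\Acc(pos(M))$. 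You instead excise $C$ to obtain a support-acyclic $\Delta'$, invoke the known acyclic equivalence as a black box, and prove that both extension-formation and reduct-formation commute with the excision. The two arguments share their load-bearing facts ($C\subseteq\Def(\Sb)$ for all $\Sb$; $C$-atoms false in every partial stable model by minimality), but yours buys modularity and a clean structural picture at the cost of the commutation bookkeeping, while the paper's is shorter and self-contained. Two points you should make explicit if you write this up: the recursive definition of $\Acc$ must be read as a least fixpoint (otherwise cycle members could enter $\Acc(\Sb)$ and the identification $\co(\Delta)=\co(\Delta')$ would fail), and the three-valued least model of the reduct assigns \emph{false}, not undefined, to atoms whose only derivations pass through a positive cycle --- you correctly flag this as the delicate step, and it goes through because positive body literals survive the reduct untouched.
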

\vspace*{-4mm}
\begin{proof} 
We denote with $\ssup(\Delta)=\{a\!\mid\! a\supportplus a\} \cup \{ b\!\mid\! \exists a\in \ssup(\Delta)\!\ldot\ a\Rightarrow b\}$ when $\Delta$ is an AFN;	 $\ssup(\Delta)=\{a\!\mid\! a\supportplus a\} \cup \{ b\!\mid\! \exists a\in \ssup(\Delta)\!\ldot\ b\Rightarrow a\}$ when $\Delta$ is an AFD.	\\
\noindent
$\bullet\ \widehat{{\co}}(\Delta) \subseteq {\cal PS}(P_{\Delta}).$
We prove that for any $\widehat{\Sb} \in \widehat{{\co}}(\Delta)$, $\widehat{\Sb} \in {\cal PS}(P_{\Delta})$.
Indeed, $P_\Delta$ contains, for each atom $a \in A$, a rule 
$a \leftarrow \bigwedge_{(b,a) \in \Omega} \neg b \wedge \bigwedge_{(c,a) \in \Gamma} c$ (or $a \leftarrow \bigwedge_{(b,a) \in \Omega} \neg b \wedge \bigwedge_{(a,c) \in \Gamma} c$ if $\Delta$ is an AFD).
Moreover, $P_\Delta^{\widehat{\Sb}}$ (the positive instantiation of $P_\Delta$ w.r.t. $\widehat{\Sb}$) contains positive rules defining exactly the arguments in $\Sb$, whose bodies contains only (positive) arguments in $\Sb$.
Since the arguments $a\in A\cap\ssup(\Delta)$ does not appear in $\Sb$ by construction (they are false in the rules' body as appearing in $\Def(\Sb)$), we have that $P_\Delta^{\widehat{\Sb}}$ does not contain cycles of positive literals, and thus 
$T_{P_\Delta^{\widehat{\Sb}}}^\omega(\emptyset) = \Sb$ and $\Sb$ is a PSM for $P_\Delta^{\widehat{\Sb}}$.\\
\noindent
$\bullet\ {\cal PS}(P_{\Delta}) \subseteq \widehat{{\tt co}}(\Delta).$
Consider a PSM $M \in {\cal PS}(P_\Delta)$, $pos(M) = T^\omega_{P_\Delta^M}(\emptyset)$. 
$pos(M) \subseteq A$ is conflict free w.r.t. $\Delta$. Indeed, assuming that there are two arguments $a,b \in pos(M)$ such that $(a,b) \in \Omega$, this means that the rule defining $b$ in $P_\Delta$ contains in the body a literal $\neg a$. This is not possible as in such a case $b \not\in T^\omega_{P_\Delta^M}(\emptyset)$.
Assuming that $\Delta$ is an AFN, and that  $a$ attacks $b$ indirectly through a supported attack $a \rightarrow a_1 \Rightarrow \cdots \Rightarrow a_n \Rightarrow b$. In such a case $a_1,...,a_n,b \not\in T^\omega_{P_\Delta^M}(\emptyset)$.
Assuming that $\Delta$ is an AFD, and that  $a$ attacks $b$ indirectly through a mediated attack $a \rightarrow a_1 \Leftarrow \cdots \Leftarrow a_n \Leftarrow b$. In such a case $a_1,...,a_n,b \not\in T^\omega_{P_\Delta^M}(\emptyset)$. Thus, $pos(M)$ is conflict free.
$pos(M) \subseteq A$ does not contain any argument $a\in \ssup(\Delta)$. Indeed, assuming for contradiction that such an argument $a\in pos(M)$ exists, then there must exists at least one other argument literal $b$ in the body of the rule defining $a$ s.t. (i) $b\in pos(M)$ and $b\in \ssup(\Delta)$. The same holds for any argument of the form of $b$. Thus $M$ is not a minimal model, contradicting the assumption.
Moreover, from Definition \ref{def:BAF-Sem}, considering that $pos(M) = T^\omega_{P_\Delta^M}(\emptyset)$, we derive that $pos(M) = \Acc(pos(M))$. 
\end{proof}

The previous theorem states that the set of complete extensions of any BAF $\Delta$ coincides with the set of PSMs of the derived logic program $P_{\Delta}$.  Consequently,  the set of stable and preferred extensions (resp., the grounded extension) coincide with the set of \textit{total stable} and \textit{maximal-stable} models (resp., the \textit{well-founded} model) of $P_\Delta$~\cite{Sacca97,GelderRS91} .

\begin{example}\label{ex:baf-new-sem-lp}\rm
The propositional program $P_\Delta$ derived from the AFN $\Delta$ of Example~\ref{ex:baf-new-sem} is
$\{ \tt a\leftarrow b;$\ 
$\tt b\leftarrow a;$\ 
$\tt c\leftarrow b;$ \
$\tt e\leftarrow \neg d;$\ 
$\tt d\leftarrow \neg c;$ \
$\tt f\leftarrow e;\}$,
where ${\cal PS}(P_{\Delta})=$ $\widehat{\co}(\Delta)=$ $\{\{\tt \neg a, \neg  b, \neg c, d, \neg e, \neg f\}\}$.~\hfill~$\square$
\end{example}

It is worth noting that, from the above result and the fact that any LP can be converted into an `equivalent' AF~\cite{CaminadaSAD15},  it is also possible to convert any BAF into an `equivalent'  AF in polynomial time.

\subsection{Computational Complexity}

The complexity of  verification and acceptance problems for acyclic BAF are well-know and coincide with the corresponding ones on AF, as any acyclic BAF can be rewritten into an AF. 
In this section we show that the same holds also for general BAF, that is the complexities of the verification and acceptance problems for general BAF are the same of those known for AF.

\begin{proposition}\label{proposition:BAFver}
For any BAF $\Delta = \< A,\Omega,\Gamma\>$, and semantics 
$\sigma  \in\tt \{ \co, \gr, \pr, \sta \}$,  checking whether a set of arguments 
$\Set{S} \subseteq A$ is a $\sigma$-extension for $\Delta$ is: $(i)$ in PTIME for $\sigma \in \{\gr, \co,\sta\}$; and $(ii)$ $coN\!P$-complete for $\sigma =\pr$.
\end{proposition}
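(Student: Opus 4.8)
The plan is to reduce the whole statement to two facts: that the operators $\Def$ and $\Acc$ of Definition~\ref{def:BAF-Sem} can be evaluated in polynomial time, and that AF is exactly the special case of BAF with $\Gamma=\emptyset$ (for which Definition~\ref{def:BAF-Sem} collapses to the AF operators $\nrDef$ and $\nrAcc$). Given these, the membership results of part~$(i)$ and the $coN\!P$ upper bound of part~$(ii)$ follow from the same fixed-point and guess-and-check arguments used for plain AF, while the $coN\!P$-hardness for $\sigma=\pr$ is inherited directly from AF.

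First I would show that, for a fixed $\Sb\subseteq A$, both $\Def(\Sb)$ and $\Acc(\Sb)$ are computable in polynomial time. The only non-standard ingredient is the disjunct $a\supportplus a$ in the definition of $\Def$, which forces every argument lying on a support cycle to be defeated regardless of $\Sb$. The set $\{a\in A\mid a\supportplus a\}$ is computable in polynomial time by a standard strongly-connected-component analysis of the support graph $(A,\Gamma)$. Taking this set as base facts, $\Def(\Sb)$ is then the least fixed point of the monotone operator induced by its defining clauses (the support-descendants of defeated arguments being added by the disjunct $\exists c\in\Def(\Sb)\ldot c\Rightarrow a$), reached by iterating from $\emptyset$ in at most $|A|$ rounds, each round polynomial. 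Likewise $\Acc(\Sb)$ is the least fixed point of the monotone operator obtained by evaluating its attack condition against the already-computed $\Def(\Sb)$ and propagating acceptability backwards along supports; iterating from $\emptyset$ excludes exactly the circularly-supported arguments, as in Example~\ref{ex:baf-new-sem}, and terminates in polynomial time.

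With polynomial evaluation of $\Def$ and $\Acc$ available, part~$(i)$ is immediate. Conflict-freeness is the test $\Sb\cap\Def(\Sb)=\emptyset$; $\Sb$ is complete \iff it is conflict-free and $\Sb=\Acc(\Sb)$; it is stable \iff it is complete and $\Sb\cup\Def(\Sb)=A$; and it is grounded \iff it equals the $\subseteq$-least complete extension, which is the least fixed point of the characteristic operator $\Sb\mapsto\Acc(\Sb)$ and is thus obtained by iterating from $\emptyset$ and compared with $\Sb$ in polynomial time. Each check is a polynomial number of polynomial tests, giving PTIME for $\sigma\in\{\gr,\co,\sta\}$.

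For part~$(ii)$, recall that $\Sb$ is preferred \iff it is a $\subseteq$-maximal complete extension, equivalently a $\subseteq$-maximal admissible set. Membership in $coN\!P$ holds because $\Sb$ preferred can be certified by checking that $\Sb$ is admissible (polynomial, via part~$(i)$) together with the absence of an admissible proper superset; the complementary property ``there is an admissible $\Sb'\supsetneq\Sb$'' is in NP, as such an $\Sb'$ can be guessed and its admissibility verified in polynomial time, so the maximality condition is in $coN\!P$ and so is the conjunction. For hardness, restricting to $\Gamma=\emptyset$ turns the BAF verification problem into the AF verification problem; since preferred verification for AF is $coN\!P$-complete~\cite{DvorakD17}, the BAF problem is $coN\!P$-hard, hence $coN\!P$-complete.

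The step I expect to be the main obstacle is the polynomial computability of $\Def$ and $\Acc$: one must verify that, after precomputing the arguments on support cycles, both recursions indeed converge to the intended least fixed points within polynomially many iterations, and in particular that taking the least (rather than greatest) fixed point of $\Acc$ is what realises the exclusion of self-supported arguments described after Definition~\ref{def:BAF-Sem}. An alternative, less self-contained route would invoke the polynomial-time rewriting of any BAF into an equivalent AF (noted after Theorem~\ref{BAFn:semantics-equivalence}) and transfer the known AF complexities, provided one checks that the extension correspondence modulo meta-arguments preserves the verification question.
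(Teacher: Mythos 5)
Your proof is correct, but it takes a genuinely different route from the paper. The paper's proof is a two-liner: lower bounds are inherited from AF because BAF generalizes AF (exactly as in your hardness argument), while the upper bounds are obtained by constructing the logic program $P_\Delta$ of Definition~\ref{def:BAFn-program} in polynomial time and invoking Theorem~\ref{BAFn:semantics-equivalence} together with the known complexity of the verification problem for logic programs under well-founded, partial stable, total stable and maximal-stable model semantics (the correspondence $\sigma\mapsto\sigma^*$ mapping $\gr,\co,\sta,\pr$ to ${\cal WF},{\cal PS},{\cal TS},{\cal MS}$). You instead give a self-contained argument: polynomial evaluation of $\Def$ and $\Acc$ as least fixed points (correctly identifying, via Example~\ref{ex:baf-new-sem}, that the least-fixed-point reading of $\Acc$ is the intended one), followed by direct membership tests and a guess-and-check argument for $\pr$. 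Your route is more elementary and does not depend on the LP equivalence theorem or on external LP complexity results; the paper's route is shorter and uniform across all four semantics, but silently relies on the (asserted, not proved) correspondences between grounded/stable/preferred extensions and well-founded/total-stable/maximal-stable models. Two small points in your version deserve a sentence each if written out in full: $(a)$ your grounded-semantics test presupposes that the least fixed point of the monotone operator $\Sb\mapsto\Acc(\Sb)$ is itself a (conflict-free) complete extension, i.e.\ an analogue of Dung's fundamental lemma for the new semantics, which the paper obtains for free from the well-founded model; and $(b)$ for the $coN\!P$ upper bound of $\pr$ you do not need the unproven identification of preferred extensions with maximal admissible sets --- guessing a \emph{complete} proper superset and checking it in polynomial time already suffices, since preferred is defined as a $\subseteq$-maximal complete extension.
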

\vspace*{-4mm}
\begin{proof}
Lower bounds derive from the complexity results of the same problems for AF~\cite{DvorakD17}, as BAF is a generalization of AF.
As for upper bounds, the proof can be carried out by writing $P_\Delta$ (in PTIME) and check that $(P_\Delta,\widehat{\Sb}=\Sb\cup\{\neg x\mid x\in \Def(\Sb)\},\sigma^*)$ is a true instance of the verification   problem in LP~\cite{Sacca97},  where $\sigma^*=\tt {\cal WF}$ (resp., $\cal PS, TS, MS$) iff $\sigma=\tt gr$ (resp., $\tt co, st, pr$).
As for Theorem~\ref{BAFn:semantics-equivalence} we have that $\widehat{\co}(\Delta)={\cal PS}(P_\Delta)$, the result follows.
\end{proof}

\begin{proposition}
For any BAF $\Delta = \< A,\Omega,\Gamma\>$  and semantics $\sigma  \in\tt \{ \co, \gr, \pr, \sta \}$,  checking whether an argument $g \in A$ is

$\bullet$\ \  credulously accepted under $\sigma$ is:
$(i)$ \ \ in PTIME for $\sigma=\tt gr$; and

\hspace*{49mm}$(ii)$ \,
 $N\!P$-complete for $\sigma \in \{ \co, \sta, \pr \}$; 
 
$\bullet$\ \  skeptically accepted under $\sigma$ is:
$(i)$ \ \ \ in PTIME for $\sigma=\tt gr$;  

\hspace*{49mm}$(ii)$ \ 
$coN\!P$-complete for $\sigma \in\tt \{ co, st \}$; and 

\hspace*{49mm}$(iii)$ 
$\Pi_2^P$-complete for $\sigma =\pr$.
\end{proposition}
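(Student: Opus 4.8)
The plan is to reduce every case to the corresponding complexity result for plain AF, exploiting Theorem~\ref{BAFn:semantics-equivalence} together with the polynomial-size mapping $\Delta \mapsto P_\Delta$ of Definition~\ref{def:BAFn-program}. Since $\widehat{\co}(\Delta)={\cal PS}(P_\Delta)$, and since the PSMs, total stable models, maximal-stable models and well-founded model of $P_\Delta$ correspond exactly to the complete, stable, preferred and grounded extensions of $\Delta$, credulous (resp.\ skeptical) acceptance of an argument $g\in A$ in $\Delta$ is equivalent to asking whether the atom $g$ is true in at least one (resp.\ all) models of the appropriate class of $P_\Delta$. Thus all upper bounds would be inherited from the known complexity of the corresponding reasoning tasks over logic programs under partial stable model semantics~\cite{Sacca97}.

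For the upper bounds I would argue case by case. For $\sigma=\tt gr$ both problems coincide (a single extension) and are solved in PTIME by computing the well-founded model of $P_\Delta$, which is polynomial. For credulous acceptance under $\sigma\in\{\co,\sta,\pr\}$, I would observe that $g$ is credulously accepted under all three semantics iff it belongs to some complete extension iff it belongs to some preferred one, so the three cases collapse; membership in $N\!P$ follows by guessing a set $\Sb$ and verifying in PTIME (via Proposition~\ref{proposition:BAFver}) that $\widehat{\Sb}\in\co(\Delta)$ with $g\in\Sb$. For skeptical acceptance under $\sigma\in\tt\{co,st\}$, skeptical reasoning over complete extensions reduces to the grounded extension and is in fact in PTIME, whereas for stable it is $coN\!P$: one guesses a stable extension \emph{not} containing $g$ and verifies it in PTIME, placing the complement in $N\!P$. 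Finally, skeptical acceptance under $\sigma=\pr$ sits at the second level: to refute it one guesses a preferred extension omitting $g$, and verifying preferredness already requires a $coN\!P$ oracle (by part~$(ii)$ of Proposition~\ref{proposition:BAFver}), giving a $\Sigma_2^P$ refutation and hence membership in $\Pi_2^P$.

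The lower bounds follow uniformly from the fact that BAF syntactically generalizes AF: any AF is the BAF $\<A,\Omega,\emptyset\>$ with empty support relation, and for such a framework Definition~\ref{def:BAF-Sem} reduces to the plain AF definitions (there are no supports and no self-supported arguments), so $\sigma$-extensions of the BAF coincide with those of the AF. Consequently the classical hardness results for AF~\cite{DvorakD17}---$N\!P$-hardness of credulous acceptance under $\co,\sta,\pr$, $coN\!P$-hardness of skeptical acceptance under $\co,\sta$, and $\Pi_2^P$-hardness of skeptical acceptance under $\pr$---transfer immediately, matching the stated upper bounds.

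I expect the main obstacle to be the careful bookkeeping in the skeptical preferred case: one must confirm that verifying a candidate preferred extension genuinely needs only a $coN\!P$ oracle (maximality among complete extensions), so that the guess-and-check refutation stays within $\Sigma_2^P$ and does not inadvertently climb higher. The remaining cases are routine reductions to the AF results and to the LP verification complexities already established in Proposition~\ref{proposition:BAFver}.
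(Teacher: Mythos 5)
Your overall strategy is the one the paper uses: upper bounds via the polynomial-time mapping $\Delta\mapsto P_\Delta$ together with the complexity of credulous/skeptical reasoning over logic programs under partial stable model semantics from~\cite{Sacca97}, and lower bounds by observing that any AF is the BAF $\<A,\Omega,\emptyset\>$, on which Definition~\ref{def:BAF-Sem} collapses to the plain AF definitions. The grounded case, credulous acceptance under $\pr$, and skeptical acceptance under $\pr$ are handled correctly and match the paper's (much terser) argument.

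There is, however, a genuine problem in your treatment of skeptical acceptance under $\co$. In the upper-bound paragraph you observe, correctly, that skeptical reasoning over complete extensions reduces to membership in the grounded extension and is therefore in PTIME; in the lower-bound paragraph you then invoke ``$coN\!P$-hardness of skeptical acceptance under $\co$'' as a classical AF result. These two claims cannot both hold unless $\mathrm{P}=coN\!P$, and the second is not in fact available: for AF, skeptical acceptance under complete semantics is P-complete (the $coN\!P$-complete case is stable, not complete), and likewise certainty inference under partial stable models in~\cite{Sacca97} reduces to the well-founded model and is polynomial. So your argument does not establish the claimed $coN\!P$-completeness for $\sigma=\co$ --- it actually contradicts it. This tension traces back to the statement itself, but a proof must either confront it or, like the paper, defer wholesale to the LP acceptance results without spelling out the per-semantics cases. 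A second, more minor slip: credulous acceptance does \emph{not} collapse across $\co$, $\sta$ and $\pr$ --- an argument can belong to a complete (hence preferred) extension of a framework that has no stable extension at all --- so for $\sigma=\sta$ you must guess and verify a \emph{stable} extension containing $g$ directly; $N\!P$ membership still follows, just not by the equivalence you state.
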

\vspace*{-4mm}
\begin{proof}
Same strategy used in the proof of Proposition~\ref{proposition:BAFver} can be used,  where we check that $(P_\Delta,g,\sigma^*)$ is a true instance of the credulous/skeptical acceptance problem in LP~\cite{Sacca97}.
\end{proof}

\section{A new semantics for Recursive BAF}\label{sec:RBAF}

In this section we present new semantics for Recursive BAF (Rec-BAF) frameworks. 
Analogously to the case of BAF, we assume that self-supported arguments (\wrt a set \Sb), that is arguments {$a$} s.t. there exists a cycle ${a} \supportone a_1 \supporttwo \cdots \supportn {a}\ . \{\beta_1,...,\beta_n \} \subseteq \Sb$, are always defeated.
Thus, the definition of defeated elements can be accomplished by adding such a condition.

\begin{definition}\label{def:RAFN-general}
For any general RAFN $\< A, R, T, {\bf s}, {\bf t} \>$ and set $\Sb \subseteq A \cup R \cup T$, we have that: 

\vspace*{1mm}
\noindent
$\bullet\ {\Def}(\Sb) = \{ X \in A \cup R \cup T\ |\
(\exists \alpha \in R \cap \Sb \ldot  \tb(\alpha)=X \wedge {\sb(\alpha) \in \Sb}) \ \vee \\ 
\hspace*{47mm} (\exists~\beta~\in~T \cap~\Sb \ldot \tb(\beta)=X \wedge {\sb~(\beta) \in \Def(\Sb)})\vee \\ 
\hspace*{47mm} \blue{(\exists\ \text{cycle}\ X \supportone a_1 \supporttwo \cdots \supportn X\ . \{\beta_1,...,\beta_n) \} \subseteq \Sb} \}$;

\vspace*{1mm}
\noindent
$\bullet\ \Acc(\Sb)\! =\!\{ X \in A\cup R\cup T\ |\
(\forall \alpha\! \in\! R  \ldot \tb(\alpha)=X\ \imply\ (\alpha \in \Def(\Sb) \vee {{\bf s}(\alpha) \in\! \Def(\Sb)}))\wedge \\
\hspace*{45mm} (\forall \beta\! \in\! T \ldot \tb(\beta)=X\!\ \imply\     (\beta \in \Def(\Sb) \vee {\bf s}(\beta) \in \Acc(\Sb)))  \}$.
\end{definition}

Again, we have highlighted in blue the differences between the definition of defeated and accepted arguments for general and acyclic RAFN.  As for the case of BAF we assume as defeated those arguments that `depends' on a cycle of support $\beta_1,\dots,\beta_n$ and all supports $\beta_1,\dots,\beta_n$ are part of the candidate extension $\Sb$.

\begin{figure}[t!]\centering
\centering
\includegraphics[scale=0.65]{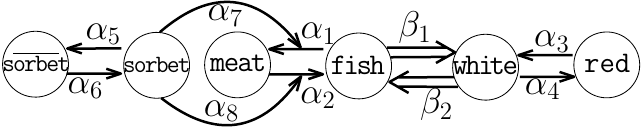}
\caption{\small Rec-BAF of Example~\ref{ex:rafn-new}.}
\label{fig:intro2}
\end{figure}

\begin{example}\label{ex:rafn-new}\rm
Consider the RAFN $\Delta$ (shown in Figure~\ref{fig:intro2}) obtained from that of Figure~\ref{fig:intro} by adding the support $\beta_2$ with $\sb(\beta_2)={\tt white}$ and $\tb(\beta_2)={\tt fish}$, and assume arguments are denoted by their initials. 
The preferred (and also stable) extensions 
under RAFN semantics are:  
$E_1=\{\tt m,r,\bar{s},\alpha_1,\dots,\alpha_8,$ $\beta_1,$ $\beta_2\}
$ and $E_2=\{\tt m,r,s,\alpha_3,\dots,\alpha_8,$ $\beta_1,$ $\beta_2\}$.
Observe that, for $E_3=\{\tt f,w,\bar{s},\alpha_1,\dots,\alpha_8,$ $\beta_1,$ $\beta_2\}$ (resp.,  $E_4=\{\tt f,m,w,s,\alpha_3,\dots,\alpha_8,\beta_1,\beta_2\}$) it holds that $\tt f,w\in$ $\Def(E_3)$  (resp., $\tt f,w\in $ $\Def(E_4)$) and thus   $E_3$ and $E_4$ are not stable (and preferred) extensions.~\hfill~$\square$
\end{example}

\begin{definition}\label{def:ASAF-general}
For any general ASAF $\< A, R, T, {\bf s}, {\bf t} \>$ and set $\Sb \subseteq A \cup R \cup T$, we have that: 

\vspace*{1mm}
\noindent
$\bullet\ {\Def}(\Sb) = \{ X \in A \cup R \cup T\ |\ {(X \in R \wedge \sb(X) \in \Def(\Sb))\  \vee}\\
\hspace*{46.5mm} {(\exists \alpha \in R \cap \Sb \ldot 
	\tb(\alpha)=X)}\ \vee \\  
\hspace*{46.5mm} (\exists \beta \in T \cap \Sb \ldot 
	\tb(\beta)=X\ \wedge\ \sb(\beta) \in \Def(\Sb))\vee \\ 
\hspace*{46.5mm} \blue{(\exists\ \text{cycle}\ X \supportone a_1 \supporttwo \cdots \supportn X\ . \{\beta_1,...,\beta_n \} \subseteq \Sb}) \}$;

\vspace*{1mm}
\noindent
$\bullet\ \Acc(\Sb)\! =\!\{\! X \!\in\! A\cup R\cup T\ |\ {(X\in R\ \imply\ \sb(X) \in \Acc(\Sb))}\wedge\\
\hspace*{44.5mm}\!(\forall \alpha \in R\!\ldot{\bf t}(\alpha)=X\ \imply\ \alpha\in\Def(\Sb)) \wedge \\
\hspace*{44mm} (\forall \beta\in T \ldot  \tb(\beta)=X\ \imply\ (\beta\in \Def(\Sb) \vee \sb(\beta)\in \Acc(\Sb)))\}$.
\end{definition}

Again, we have highlighted in blue the differences between the definition of defeated and accepted arguments for general and acyclic ASAF.  
As for the case of BAF we assume as defeated those arguments that `depends' on a cycle of support $\beta_1,\dots,\beta_n$ and all supports $\beta_1,\dots,\beta_n$ are part of the candidate extension $\Sb$.

\begin{example}\label{ex:asaf-new}\rm
Consider the ASAF $\Delta$ shown in Figure~\ref{fig:intro2}, and assume arguments are denoted by their initials. 
The preferred (and also stable) extensions 
under ASAF semantics are:  
$E_1=\{\tt m,r,$ $\bar{s}, $ $\alpha_2,$ $\alpha_3,$ $\alpha_6,$ $\beta_1,$ $\beta_2\}
$ and $E_2=\{\tt m,r,s,\alpha_3,\alpha_5,\alpha_7,\alpha_8,\beta_1,\beta_2\}$.
Observe that, for $E_3=\{\tt f,w,\bar{s},\alpha_1,\alpha_4,\alpha_6,\beta_1,\beta_2\}$ (resp.,  $E_4=\{\tt f,m,w,s,\alpha_4,\alpha_5,\alpha_7,\alpha_8,\beta_1,\beta_2\}$) it holds that $\tt f,w\in$ $\Def(E_3)$  (resp., $\tt f,w\in $ $\Def(E_4)$) and thus   $E_3$ and $E_4$ are not stable (and preferred) extensions.~\hfill~$\square$
\end{example}

As for BAF our Rec-BAF semantics coincide with that of acyclic Rec-BAF defined in Section~\ref{sec:Preliminaries} whenever the Rec-BAF is support-acyclic, as stated in the following proposition.

\begin{proposition}\label{prop:BAF-generalize}
Let $\Delta$ be a RAFN (resp. ASAF) and $\sigma \in\tt \{ \co, \gr, \pr, \sta \}$ a semantics.   
If $\Delta$ is acyclic,  then $\sigma(\Delta)$ computed by means of Definition~\ref{def:RAFN-sem} (resp.  Definition~\ref{def:ASAF-sem}) coincides with $\sigma(\Delta)$ computed by means of Definition~\ref{def:RAFN-general} (resp.  Definition~\ref{def:ASAF-general}). 
\end{proposition}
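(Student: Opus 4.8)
The plan is to prove this by reducing to the definitional observation that the blue clauses, which distinguish the general definitions from the acyclic ones, become vacuous precisely when the underlying Rec-BAF is support-acyclic. Since all four notions (conflict-free, admissible, complete, preferred, stable, grounded) are defined in terms of $\Def$ and $\Acc$ in the standard way (as stated in the text following Definition~\ref{def:ASAF-general}), it suffices to show that for every candidate set $\Sb \subseteq A \cup R \cup T$ we have $\Def(\Sb) = \nrDef(\Sb)$ and $\Acc(\Sb) = \nrAcc(\Sb)$, where the unhatted functions are those of Definition~\ref{def:RAFN-sem} (resp.~\ref{def:ASAF-sem}) and the capitalized ones are those of Definition~\ref{def:RAFN-general} (resp.~\ref{def:ASAF-general}). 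Once the defeated and acceptable operators agree pointwise, every derived notion agrees, and hence $\sigma(\Delta)$ computed either way coincides for each $\sigma \in \{\co,\gr,\pr,\sta\}$. This mirrors exactly the argument already used for the BAF case in the proof of the earlier Proposition~\ref{prop:BAF-generalize}.

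First I would treat the RAFN case. Comparing Definition~\ref{def:RAFN-sem} with Definition~\ref{def:RAFN-general}, the $\Acc$ clauses are textually identical to the $\nrAcc$ clauses, so $\Acc(\Sb)=\nrAcc(\Sb)$ holds provided $\Def(\Sb)=\nrDef(\Sb)$ (both acceptability definitions refer to the same defeated set). For the defeated sets, the only difference is the extra blue disjunct in $\Def$, namely the existence of a support cycle $X \supportone a_1 \supporttwo \cdots \supportn X$ with $\{\beta_1,\dots,\beta_n\}\subseteq\Sb$. The key point is that acyclicity of $\Delta$, defined in the text via the auxiliary BAF in which each support $a \Rightarrow^{\beta} b$ is expanded into $a \Rightarrow \beta$ and $\beta \Rightarrow b$, guarantees there is no argument lying on a support cycle. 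Hence no such $X$ and no such sequence $\beta_1,\dots,\beta_n$ exist, the blue disjunct is never satisfiable, and $\Def$ reduces verbatim to $\nrDef$. The remaining two disjuncts of $\Def$ coincide syntactically with those of $\nrDef$, completing the RAFN equality. The ASAF case is entirely analogous: Definition~\ref{def:ASAF-general} differs from Definition~\ref{def:ASAF-sem} only by the same blue cycle disjunct in $\Def$, which is again vacuous under acyclicity, while all non-blue clauses (including the $X\in R \Rightarrow \sb(X)\in\Def(\Sb)$ clause and its acceptance counterpart) are identical.

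The main obstacle, and the one point deserving care rather than mere textual comparison, is justifying that support-acyclicity of the Rec-BAF genuinely rules out the blue disjunct. This requires checking that the notion of cycle appearing in the blue clause, a chain of supports $X \supportone \cdots \supportn X$ returning to $X$, is exactly what the auxiliary-BAF definition of acyclicity forbids. I would argue that any such support chain among elements of $A \cup R \cup T$ induces a support cycle in the auxiliary BAF (each $\support{\beta_i}$ corresponds to the pair of edges $\sb(\beta_i)\Rightarrow\beta_i$ and $\beta_i\Rightarrow\tb(\beta_i)$), contradicting acyclicity; conversely acyclicity of the auxiliary BAF precludes it. This is a routine but necessary bridge between the two formalizations of ``cycle,'' and once it is established the rest of the proof is a direct syntactic identification of the two operator definitions, exactly parallel to the already-proven BAF analogue.
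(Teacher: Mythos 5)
Your proposal is correct and follows exactly the paper's route: the paper's own proof is the one-line observation that for any acyclic Rec-BAF and any set $\Sb\subseteq A\cup R\cup T$ the operators satisfy $\Def(\Sb)=\nrDef(\Sb)$ and $\Acc(\Sb)=\nrAcc(\Sb)$, which is precisely your pointwise-agreement argument. Your additional care in bridging the auxiliary-BAF formalization of acyclicity with the support-chain notion of cycle in the blue disjunct is a reasonable elaboration of a step the paper leaves implicit, not a different approach.
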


\begin{proof}
The result follows by observing that,  for any acyclic Rec-BAF $\Delta=\< A, R, T, {\bf s}, {\bf t} \>$ and set $\Sb \subseteq A \cup R \cup T$, we have that ${\Def}(\Sb) ={def}(\Sb) $ and ${\Acc}(\Sb) ={acc}(\Sb) $. 
\end{proof}

The semantics of general Rec-BAFs  under deductive support (i.e.,  RAFD, and AFRAD) are obtained from the definition of defeated and acceptable elements in acyclic Rec-BAF (i.e., $\nrDef$, $\nrAcc$) by (i) replacing $\nrDef$ and $\nrAcc$ with $\Def$ and $\Acc$ and (ii) adding the condition $\exists\ \text{cycle}\ X \supportone a_1 \supporttwo \cdots \supportn X\ . \{\beta_1,...,\beta_n \} \subseteq \Sb$ in the disjunct condition of $\Def$, as done in Definition~\ref{def:RAFN-general} for RAFN and Definition~\ref{def:ASAF-general} for ASAF.

\subsection{LP mapping}

We now provide the mappings from general Rec-BAF to ground logic programs, so that the set of $\sigma$-extensions of any general Rec-BAF $\Delta$ is equivalent to that of partial stable models of the corresponding logic program $P_\Delta$. 
Basically, the logic rules of the program $P_\Delta$ are derived from the topology of $\Delta$.
The main difference \wrt the rules generated for BAF is that we now have rules defining all elements (arguments, attacks and supports) denoted by $X$.
Thus, for instance, regarding RAFN, a target element $X$ is true if $(i)$ for every attack $\alpha$, either $\alpha$ or the source of $\alpha$ is false, and $(ii)$ for every support $\beta$, either $\beta$ is false or the source of $\beta$ is true.

\begin{definition}\label{def:AFRAS-program-new-sem}
Given an RAFN (resp. RAFD) 
$\Delta = \<A, R, T, {\bf s}, {\bf t}\>$, then 
$P_\Delta $ (the propositional program derived from $\Delta$) contains, for each element $X \in A \cup R \cup T$, a rule  
{\small
\[
X \leftarrow 
	\!\!\!\!\!\!\! \bigwedge_{\alpha \in R \wedge {\bf t}(\alpha)=X} \!\!\!\!\!\!\! (\neg \alpha \vee \neg \sb(\alpha))\wedge \!\!\!\!\!\!\!\! 
	 \bigwedge_{\beta \in T \wedge \tb(\beta)=X} \!\!\!\!\!\!(\neg \beta \vee \sb(\beta)). 
\left(\mbox{resp. }
	 X \leftarrow \!\!\!\!\!\!\!
	\bigwedge_{\alpha \in R \wedge {\bf t}(\alpha)=X} \!\!\!\!\!\!\!(\neg \alpha \vee \neg \sb(\alpha))\wedge \!\!\!\!\!\!
	 \bigwedge_{\beta \in T \wedge \sb(\beta)=X} \!\!\!\!\!\!\!(\neg \beta \vee \tb(\beta)). 
\right)
\]
} 
\end{definition}

\begin{definition}\label{def:AFRAS-program}
For any ASAF (resp. AFRAD) $\Delta = \<A, R, T, {\bf s}, {\bf t}\>$, $P_\Delta $ (the propositional program derived from $\Delta$) contains, for each $X \in A \cup R \cup T$, a rule of the form
{\small
\[
X \leftarrow \varphi(X) \!\
\wedge \!\!\!\!\!\!\!
	\bigwedge_{\alpha \in R \wedge {\bf t}(\alpha)=X}\!\!\!\!\!\!\!\!\!\! \neg \alpha\ \ \  \wedge
	\!\!\!\!\!\!\! \bigwedge_{\beta \in T \wedge \tb(\beta)=X} \!\!\!\!\!\!\!\!\!\!\!\!\!\ (\neg \beta \vee \sb(\beta)) 
\left(\mbox{resp. }
X \leftarrow 
	\varphi(X) \wedge \!\!\!\!\!\!\!
	\bigwedge_{\alpha \in R \wedge {\bf t}(\alpha)=X} \!\!\!\!\!\!\!\!\!\!\!\!\!\ \neg \alpha\ \ \ \ \   \wedge
	 \!\!\!\!\!\!\! \bigwedge_{\beta \in T \wedge \sb(\beta)=X} \!\!\!\!\!\!\!\!\!\!\!\!\!\ (\neg \beta \vee \tb(\beta))
\right)
\]
}
where: $\varphi(X)={\bf s}(X)$ if $\ X\in R$; otherwise, $\varphi(X)=$\true.
\end{definition}

The set of complete extensions of any Rec-BAF $\Delta$ coincides with the set of PSMs of the derived propositional program $P_\Delta$.

\begin{thm}\label{thm:RECBAF-equivLP}
	For any Rec-BAF $\Delta$, \ \  	
$\widehat{\co}(\Delta) = {\cal PS}(P_{\Delta} )$.
\end{thm}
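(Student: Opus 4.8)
The plan is to follow the same strategy used for Theorem~\ref{BAFn:semantics-equivalence}, proving the two inclusions $\widehat{\co}(\Delta) \subseteq {\cal PS}(P_\Delta)$ and ${\cal PS}(P_\Delta) \subseteq \widehat{\co}(\Delta)$ separately, but now over the enlarged universe $A \cup R \cup T$ of arguments, attacks, and supports. The essential new ingredient, relative to the plain BAF case, is that the logic rules now define \emph{all} elements $X$ (not just arguments), and that the recursive-attack semantics forces the body of the rule for $X$ to encode both the clause "for every attack $\alpha$ targeting $X$, either $\alpha$ or its source is defeated" and the clause "for every support $\beta$ targeting $X$, either $\beta$ is defeated or its source is accepted/defeated," exactly mirroring Definitions~\ref{def:RAFN-general} and~\ref{def:ASAF-general}. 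Because these definitions (and the corresponding programs of Definitions~\ref{def:AFRAS-program-new-sem} and~\ref{def:AFRAS-program}) come in four variants (RAFN, RAFD, ASAF, AFRAD), I would fix one representative case, say RAFN, carry out the argument in full, and then note that the deductive variants are obtained by the source/target swap already highlighted in blue, and that the ASAF/AFRAD variants differ only through the extra conjunct $\varphi(X)$ handling the status of attacks whose source is defeated.

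First I would establish the forward inclusion. Given $\widehat{\Sb} = \langle \Sb, \Def(\Sb)\rangle \in \widehat{\co}(\Delta)$, I would form the positive reduct $P_\Delta^{\widehat{\Sb}}$ by evaluating every negated literal $\neg\alpha$, $\neg\beta$, $\neg\sb(\alpha)$ under $\widehat{\Sb}$, and show that its least fixpoint $T^\omega_{P_\Delta^{\widehat{\Sb}}}(\emptyset)$ equals $\Sb$. The key observation is that $X \in \Sb = \Acc(\Sb)$ exactly when every attack-disjunction and support-disjunction in the body of $X$'s rule is satisfied with its surviving positive literals already in $\Sb$; completeness ($\Sb = \Acc(\Sb)$) guarantees the body is derivable, while conflict-freeness guarantees no element of $\Def(\Sb)$ sneaks into the fixpoint. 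The crucial point, directly analogous to the BAF proof, is that the support-cycle condition newly added in blue to $\Def$ ensures that any $X$ lying on (or depending on) a support cycle $X \supportone a_1 \supporttwo \cdots \supportn X$ with $\{\beta_1,\dots,\beta_n\}\subseteq\Sb$ is placed in $\Def(\Sb)$, hence excluded from $\Sb$; this is precisely what prevents $P_\Delta^{\widehat{\Sb}}$ from containing a cycle of positive literals, so the reduct is a positive acyclic program whose minimal model is $\Sb$, certifying $\widehat{\Sb}\in{\cal PS}(P_\Delta)$.

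For the converse inclusion I would take a partial stable model $M\in{\cal PS}(P_\Delta)$, set $pos(M)=T^\omega_{P_\Delta^M}(\emptyset)$, and verify that $pos(M)$ is a complete extension with $\Def(pos(M))$ matching the false atoms of $M$. Conflict-freeness follows as before: if $\alpha\in pos(M)$ attacked $X\in pos(M)$ via $\tb(\alpha)=X$ with $\sb(\alpha)\in pos(M)$, the literal $\neg\alpha$ (or $\neg\sb(\alpha)$) in $X$'s rule body would block $X$ from entering the fixpoint. The new and most delicate step is the support-cycle argument: I would show that $pos(M)$ contains no self-supported element by the same minimality contradiction as in Theorem~\ref{BAFn:semantics-equivalence}—if some $X$ on a support cycle with all $\beta_i$ true were in $pos(M)$, then some positive support-source literal $\sb(\beta)$ in its body would itself be on the cycle, propagating around the loop and contradicting the minimality of the minimal model of the reduct. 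Finally I would read the definitions of $P_\Delta$ back into the $\Def$/$\Acc$ conditions to conclude $pos(M)=\Acc(pos(M))$, giving a complete extension.

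I expect the main obstacle to be making the support-cycle handling fully rigorous in the recursive setting, where a cycle may run through attack and support \emph{names} rather than only through arguments: the equivalence between "$X$ depends on a support cycle all of whose supports are in $\Sb$" (the blue disjunct in $\Def$) and "the rule for $X$ in $P_\Delta$ induces a positive-literal cycle in the reduct" must be argued carefully, since the body uses disjunctions $(\neg\beta \vee \sb(\beta))$ whose surviving positive literal is $\sb(\beta)$ only when $\beta$ is true in $M$. Getting the quantifier and reduct bookkeeping right here, and then checking that the $\varphi(X)$ conjunct for the ASAF/AFRAD variants does not disturb the cycle argument (it only records that an attack whose source is defeated is itself defeated), is where the real work lies; the remaining cases then follow by the documented source/target symmetry without repeating the full argument.
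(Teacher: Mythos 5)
Your proposal is correct and follows essentially the same route as the paper, which proves this theorem by reducing it to the argument of Theorem~\ref{BAFn:semantics-equivalence}, noting only that extensions and partial stable models now also range over attack and support names. Your write-up in fact supplies more detail than the paper does (in particular on how the support-cycle disjunct in $\Def$ keeps the reduct free of positive cycles over the enlarged universe $A \cup R \cup T$, and on the role of $\varphi(X)$ in the ASAF/AFRAD variants), which is a useful elaboration rather than a deviation.
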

\vspace*{-4mm}
\begin{proof}
The proof is similar to that of Theorem~\ref{BAFn:semantics-equivalence}. The only difference is that complete extensions also contain attacks and supports, whereas the logic program also contains rules defining attacks and supports and, consequently, the partial stable models contain arguments, attacks and supports.$\!\!\!\!\!$
\end{proof}

\begin{example}
{Consider the RAFN $\Delta$ shown in Figure~\ref{fig:intro2}, and assume arguments are denoted by their initials. The propositional program $P_\Delta$ derived from $\Delta$ is as follows: 
\begin{center}
\begin{tabular}{cccc}
$\tt m\leftarrow \neg \alpha_1 \vee f;$ & $\tt f\leftarrow (\neg \alpha_2\vee m)\wedge(\neg \beta_2\vee w);$ & $\tt s\leftarrow \neg \alpha_6\vee \bar{s};$ & $\tt \bar{s}\leftarrow \neg \alpha_5\vee s;$ \\
 $\tt r\leftarrow \neg \alpha_4\vee w;$ &  $\tt w\leftarrow (\neg \alpha_3\vee r)\wedge(\neg \beta_1\vee f);$ &$\tt \alpha_1\leftarrow \neg \alpha_7\vee s;$ & $\tt \alpha_2\leftarrow \neg \alpha_8\vee s;$\\ 
$\tt \alpha_3\leftarrow ;\  \alpha_4\leftarrow ;$ & $\tt \alpha_5\leftarrow;	 \  \alpha_6\leftarrow ;$ & $\tt \alpha_7\leftarrow ;\  \alpha_8\leftarrow ;$ &$\tt \beta_1\leftarrow ;\ \beta_2\leftarrow ;$
\end{tabular}
\end{center}
\noindent 
where ${\cal PS}(P_{\Delta})=$ $\widehat{\co}(\Delta)=\{E_0,E_1,E_2,E_3,E_4,E_5\}$,  with:
$E_0=\{\tt \alpha_3,\dots,\alpha_8, \beta_1,\beta_2\}$,
$E_1=E_0\cup\{\tt m, r\}$, 
$E_2=E_0\cup\{\tt m, s\}$, 
$E_3=E_0\cup\{\tt m, r, s\}$, 
$E_4=E_0\cup\{\tt \bar{s}, \alpha_1,\alpha_2\}$,  and 
$E_5=E_0\cup\{\tt m, r, \bar{s}, \alpha_1,\alpha_2\}$.~\hfill~$\Box$
}
\end{example}

Observe that, from the above result and the fact that any LP can be converted into an `equivalent' AF~\cite{CaminadaSAD15},  it is also possible to convert any Rec-BAF into an `equivalent'  AF in polynomial time.

\subsection{Computational Complexity}

The complexity of verification and acceptance problems for acyclic Rec-BAF are well-known and coincide with the corresponding ones on AF, as any acyclic Rec-BAF can be rewritten into an AF. 
In this section we show that the same holds also for general Rec-BAF, that is the complexities of the verification and acceptance problems for general BAF are the same of those known for AF, as stated in the following two propositions.

\begin{proposition}\label{propositionrecbaf-ver}
For any Rec-BAF $\< A, R, T, {\bf s}, {\bf t} \>$ and semantics 
$\sigma  \in\tt \{ \co, \gr, \pr, \sta \}$, 
checking whether a set of arguments 
$\Set{S} \subseteq A$ is a $\sigma$-extension for $\Delta$ is $(i)$ in PTIME for $\sigma \in \{\gr, \co,\sta\}$; and $(ii)$ $coN\!P$-complete for $\sigma =\pr$.
\end{proposition}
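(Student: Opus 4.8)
The plan is to mirror the proof of Proposition~\ref{proposition:BAFver}, exploiting the characterization of Rec-BAF extensions as partial stable models established in Theorem~\ref{thm:RECBAF-equivLP}. First I would reduce the verification problem for $\Delta$ to the corresponding verification problem for its logic program $P_\Delta$. Given the candidate $\Sb$, I build a three-valued interpretation $\widehat{\Sb}$ over the full signature $A\cup R\cup T$ by setting as true the elements of $\Sb$, as false the elements of $\Def(\Sb)$, and leaving the remaining elements undefined, exactly as $\widehat{\Sb}=\Sb\cup\{\neg x\mid x\in\Def(\Sb)\}$ in the BAF case. Both $P_\Delta$ (Definition~\ref{def:AFRAS-program-new-sem}/Definition~\ref{def:AFRAS-program}) and the sets $\Def(\Sb)$, $\Acc(\Sb)$ are computable in polynomial time: the only genuinely new ingredient \wrt the BAF setting is the self-support condition $\exists\ \text{cycle}\ X \supportone a_1 \supporttwo \cdots \supportn X\,.\,\{\beta_1,\dots,\beta_n\}\subseteq \Sb$, but deciding it amounts to a reachability test in the support graph induced by $\Sb\cap T$, hence stays in PTIME.

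For the upper bounds I would invoke the equivalence $\widehat{\co}(\Delta)={\cal PS}(P_\Delta)$ of Theorem~\ref{thm:RECBAF-equivLP}, together with the standard correspondence between model classes: the grounded extension matches the well-founded model, complete extensions match partial stable models, stable extensions match total stable models, and preferred extensions match the $\subseteq$-maximal stable models~\cite{Sacca97,GelderRS91}. Verifying that $\widehat{\Sb}$ is of the required kind therefore reduces to deciding whether $(P_\Delta,\widehat{\Sb},\sigma^*)$ is a yes-instance of LP verification, with $\sigma^*={\cal WF}$ (resp.\ ${\cal PS}$, ${\cal TS}$, ${\cal MS}$) when $\sigma=\gr$ (resp.\ $\co$, $\sta$, $\pr$). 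The complexities of these LP verification problems are known~\cite{Sacca97}: PTIME for ${\cal WF}$, ${\cal PS}$ and ${\cal TS}$, which yields item $(i)$, and $coN\!P$-completeness for ${\cal MS}$, which yields the $coN\!P$ membership of item $(ii)$.

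For the matching lower bound in item $(ii)$, I would observe that AF is the special case of Rec-BAF in which $T=\emptyset$ and every attack has an argument as its target; under this embedding the two notions of preferred extension coincide, so preferred-extension verification for Rec-BAF inherits the $coN\!P$-hardness already established for AF~\cite{DvorakD17}. Combining membership and hardness gives $coN\!P$-completeness.

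The step I expect to require the most care is the construction and justification of $\widehat{\Sb}$ over $A\cup R\cup T$ starting from a candidate that, as stated, only fixes the arguments $\Sb\subseteq A$: one must argue that the truth values of the attack and support elements are forced by $\Sb$ through the recursive definitions of $\Def$ and $\Acc$, so that the completion is unique and polynomially computable, and that this completion is precisely the projection used in Theorem~\ref{thm:RECBAF-equivLP}. Once this bookkeeping is settled, the reduction to LP verification is immediate and the complexity bounds transfer verbatim from the BAF case.
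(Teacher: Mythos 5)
Your proposal is correct and follows essentially the same route as the paper: construct $P_\Delta$ in polynomial time, form $\widehat{\Sb}=\Sb\cup\{\neg x\mid x\in\Def(\Sb)\}$, reduce to LP verification under $\sigma^*\in\{{\cal WF},{\cal PS},{\cal TS},{\cal MS}\}$ via Theorem~\ref{thm:RECBAF-equivLP}, and obtain hardness from the AF special case. The extra care you flag about completing a candidate $\Sb\subseteq A$ to the full signature $A\cup R\cup T$ is a legitimate bookkeeping point that the paper's own (terser) proof glosses over, but it does not change the argument.
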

\vspace*{-2mm}

\begin{proof}Lower bounds derive from the complexity results of the same problems for AF~\cite{DvorakD17}, as Rec-BAF is a generalization of AF.
Regarding upper bounds, the proof can carried out by writing $P_\Delta$ (in PTIME) and check that $(P_\Delta,\widehat{\Sb}=\Sb\cup\{\neg x\mid x\in \Def(\Sb)\},\sigma^*)$  is a true instance of the verification problem in LP~\cite{Sacca97}, where $\sigma^*=\tt {\cal WF}$ (resp., $\cal PS, TS, MS$) iff $\sigma=\tt gr$ (resp., $\tt co, st, pr$).
As for Theorem~\ref{thm:RECBAF-equivLP} we have that $\widehat{\co}(\Delta)={\cal PS}(P_\Delta)$, the result follows.
\end{proof}

\begin{proposition}
For any Rec-BAF $\< A, R, T, {\bf s}, {\bf t} \>$ and semantics 
$\sigma  \in\tt \{ \co, \gr, \pr, \sta \}$, 
checking whether an argument $g \in A$ is

$\bullet $ credulously accepted under $\sigma$ is:
$(i)$
in PTIME for $\sigma=\tt gr$; and

\hspace*{49mm}$(ii)$
 $N\!P$-complete for $\sigma \in \{ \co, \sta, \pr \}$; 
 
$\bullet$ skeptically accepted under $\sigma$ is:
$(i)$ in PTIME for $\sigma=\tt gr$;

\hspace*{48mm}$(ii)$
$coN\!P$-complete for $\sigma \in\tt \{ co, st \}$; and 

\hspace*{48mm}$(iii)$ 
$\Pi_2^P$-complete for $\sigma =\pr$.
\end{proposition}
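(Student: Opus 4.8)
The plan is to reuse, almost verbatim, the strategy of the proof of Proposition~\ref{propositionrecbaf-ver}: translate each acceptance question on the Rec-BAF $\Delta$ into the corresponding acceptance question on the derived program $P_\Delta$ and read off the complexity from the logic-programming classification of \cite{Sacca97}. First I would obtain the lower bounds by noting that a Dung AF is the degenerate Rec-BAF with $T=\emptyset$ in which every attack targets an argument; under this embedding the credulous and skeptical acceptance problems for AF map directly onto those for Rec-BAF, so the matching hardness results tabulated in \cite{DvorakD17} transfer ($N\!P$-hardness of credulous acceptance for $\sigma\in\{\co,\sta,\pr\}$, $coN\!P$-hardness of skeptical acceptance for $\sigma\in\{\co,\sta\}$, and $\Pi_2^P$-hardness of skeptical acceptance for $\sigma=\pr$). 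For $\sigma=\gr$ both problems stay polynomial because the grounded extension is unique and, by Theorem~\ref{thm:RECBAF-equivLP}, corresponds to the well-founded model of $P_\Delta$, which is computable in PTIME.

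For the upper bounds I would build $P_\Delta$ in polynomial time (Definition~\ref{def:AFRAS-program-new-sem} for the necessity frameworks, Definition~\ref{def:AFRAS-program} for the deductive ones) and invoke Theorem~\ref{thm:RECBAF-equivLP}, which yields $\widehat{\co}(\Delta)={\cal PS}(P_\Delta)$. As already observed after that theorem, the stable and preferred extensions of $\Delta$ correspond respectively to the total-stable and maximal-stable models of $P_\Delta$, and the grounded extension to its well-founded model. Consequently an argument $g\in A$ is credulously (resp. skeptically) accepted under $\sigma$ in $\Delta$ if and only if the atom $g$ is credulously (resp. skeptically) true in the class of models $\sigma^*$ of $P_\Delta$, where $\sigma^*={\cal WF},{\cal PS},{\cal TS},{\cal MS}$ for $\sigma=\gr,\co,\sta,\pr$, respectively. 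The claimed membership bounds then follow by certifying that $(P_\Delta,g,\sigma^*)$ is a true instance of the credulous/skeptical acceptance problem in logic programming and quoting the complexity of that problem from \cite{Sacca97}.

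The step I expect to be the main obstacle is ensuring that the correspondence of Theorem~\ref{thm:RECBAF-equivLP} is tight enough to preserve \emph{quantified} acceptance rather than merely the set of extensions. Concretely, under the identification $\widehat{\Sb}=\Sb\cup\{\neg x\mid x\in\Def(\Sb)\}$ I must verify that the truth value of the distinguished atom $g$ in a partial stable model of $P_\Delta$ coincides with membership of $g$ in the associated extension of $\Delta$, and that this holds uniformly across each model class so that ``for some extension'' matches ``for some model'' and ``for every extension'' matches ``for every model''. Once this quantifier-preserving bijection is in place, each entry of the statement reduces to the matching line of the classification in \cite{Sacca97}, and the proposition follows; special care is only needed in lining up the complete/$\,{\cal PS}$ and preferred/$\,{\cal MS}$ rows, since these are the cases where the $3$-valued model classes and the argumentation-theoretic semantics must be matched exactly.
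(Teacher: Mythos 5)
Your proposal is correct and follows essentially the same route as the paper: lower bounds by observing that AF is a special case of Rec-BAF (so the hardness results of \cite{DvorakD17} transfer), and upper bounds by constructing $P_\Delta$ in polynomial time, invoking Theorem~\ref{thm:RECBAF-equivLP} to align $\sigma$-extensions with the corresponding model classes $\sigma^*\in\{{\cal WF},{\cal PS},{\cal TS},{\cal MS}\}$, and reducing to the credulous/skeptical acceptance problem for logic programs classified in \cite{Sacca97}. The extra care you take in checking that the extension-to-model correspondence preserves the truth value of the distinguished atom $g$ uniformly across each model class is left implicit in the paper's (very terse) proof, but it is exactly the right point to verify.
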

\vspace*{-2mm}

\begin{proof}
Same strategy used in the proof of Proposition~\ref{propositionrecbaf-ver} can be used,  where we check that $(P_\Delta,g,\sigma^*)$ is a true instance of the credulous/skeptical acceptance problem in LP~\cite{Sacca97}.
\end{proof}

It turns out that general Rec-BAF are as expressive as BAF and AF,  though more general relations (i.e.,  cyclic supports and recursive relations) can be easily expressed.

\section{Related Work}
Bipolarity in argumentation is discussed in~\cite{AmgoudCL04}, where a formal definition of bipolar argumentation framework (BAF) extending Dung's AF by including supports is provided.
A survey of different approaches to support in argumentation can be found in~\cite{CohenGGS14},  and a survey on different recursive AF-based frameworks can be found in~\cite{CayrolCLS21}.
However, a semantics for AFNs with cyclic supports have been recently defined in the literature~\cite{NouiouaB23}.  
Essentially, it is based on avoiding considering the contribution of \textit{incoherent} (set of) arguments, that are those occurring in support-cycles or (transitively) supported by arguments in support-cycles.
In the same spirit,  our approach ensures that incoherent arguments are always defeated (i.e., appearing in $\Def$).  
A semantics for general RAFNs has been recently defined in~\cite{LagasquieSchiex23}.
We believe their proposal is quite intricate as it relies on numerous definitions, whereas ours seamlessly extends the definitions of defeated and acceptable elements, previously defined for AF(N) in an elegant and uniform way.

There has been an increasing interest in studying the relationships between argumentation frameworks and logic programming (LP)~\cite{CaminadaSAD15,Alfano-TPLP}.
In particular,  the semantic equivalence between complete extensions in AF and 3-valued stable models in LP was first established in~\cite{WuCG09}.
Then, the relationships of LP with AF have been further studied in~\cite{CaminadaSAD15}.
A one-to-one correspondence between extensions of general AFN and corresponding normal logic program (LP) has been proposed in~\cite{NouiouaB23},  under complete-based semantics.  
A logical encoding able to characterize the semantics of general RAFN has been proposed in~\cite{LagasquieSchiex23}.  
However, the rules defining acceptability of arguments are a bit involved,  as they require several predicate symbols.   
Nevertheless, in \cite{NouiouaB23,LagasquieSchiex23} no attention has been devoted to deductive supports.

Efficient mappings from AF to \emph{Answer Set Programming} (i.e. LP with \emph{Stable Model} semantics \cite{GelfondL88}) have been investigated as well \cite{SakamaR17,GagglMRWW15}.
The well-know AF system ASPARTIX~\cite{DvorakGRWW20} is implemented by rewriting the input AF into an ASP program and using an ASP solver to compute extensions. 

Our work is complementary to approaches providing the semantics for an AF-based framework by using meta-argumentation, that is, by relying on a translation from a given AF-based framework to an AF~\cite{CohenGGS15,AlfanoGP18c,AlfanoGP18}.
In this regard, we observe that meta-argumentation approaches have the drawback of making it a bit  difficult to understand the original meaning of arguments and interactions, once translated into the resulting meta-AF.
In fact, those approaches rely on translations that generally require adding several meta-arguments and meta-attacks to the resulting meta-AF in order to model the original interactions.
Concerning approaches that provide the semantics of argumentation frameworks by LPs~\cite{CaminadaSAD15}, we observe that a logic program for an AF-based framework can be obtained by first flattening the given framework into a meta-AF and then converting it into a logic program.
The so-obtained program contains the translation of meta-arguments and meta-attacks that make the program much more verbose and difficult to understand (because not straightly derived from the given extended AF framework) in our opinion, compared with the direct translation we proposed.
Moreover, the proposed approach uniformly deals with several AF-based frameworks.
Other extensions of the Dung's framework not explicitly discussed in this paper are also captured by our technique as they are special cases of some of those studied in this paper.
This is the case of \emph{Extended AF (EAF)} and \emph{hierarchical EAF}, which extend AF by allowing second order and stratified attacks, respectively \cite{Modgil-AI-09}, that are special cases of recursive attacks.

Three-valued semantics have been also explored in the \textit{Abstract Dialectical Framework} (ADF) \cite{Brewka-Strass-Ellmauthaler-Wallner-Woltran13-sh,StrassW15,Brewka20,BaumannH23},  that allows to explicit acceptance conditions over arguments in the form of propositional logic formulae.  Recently,  it has been shown that its semantics can be captured by the (monotonic three-valued) possibilistic logic~\cite{HeyninckKRST22}.  In particular, the semantics of an ADF relies on a characteristic operator which takes as an input a three-valued interpretation $\nu$ and returns an interpretation by considering all possible  two-valued completions of $\nu$.  
We point out that (general) Rec-BAF can be modeled by ADF,  and in particular that verification and acceptance reasoning in Rec-BAF can be reduced to ADF. This is backed by the computational 
complexity of the two frameworks~\cite{StrassW15} as ADF is strictly more expressive than Rec-BAF under complete, preferred and stable semantics (one level higher in the polynomial hierarchy).  Moreover,  less expressive subclasses of ADF have been also explored.  
For instance, the subclass called bipolar ADFs (BADFs) has been shown to exhibit complexity comparable to that of AF (and to that of Rec-BAF), as is it possible to avoid considering all the possible two-valued completions through the application of Kleene logic~\cite{BaumannH23}.
Exploring the connection between subclasses of ADF and Rec-BAF is a possible direction for future work.

\section{Conclusions}\label{sec:conlusion}

In this paper we jointly tackled two relevant aspects that were so far considered separately by the community of argumentation: extending Dung's abstract argumentation framework with recursive attacks and (general) supports, and show that the semantics for specific BAF and Rec-BAF frameworks can be defined by very simple and intuitive modifications of that defined for the case of AF.
We presented in an elegant and uniform way the semantics of several (possibly cyclic) AF-based frameworks,  including those for which a semantics has never been proposed. 

Our semantics is inspired by the self-supportless principle in logic programs,  that ensures no literal in an answer set is supported exclusively by itself or through a cycle of dependencies that lead back to itself.   This principle is essential for maintaining the integrity of the answer set, ensuring that it is grounded in the logic rules provided and does not rely on circular reasoning.  
It is worth noting that,  this intuition is also reflected in the stable semantics of 
ADF~\cite{Brewka-Strass-Ellmauthaler-Wallner-Woltran13-sh}. 
Indeed,  the basic intuition is that all elements of a stable model should have a non-cyclic justification.  
For instance,  considering the (bipolar) ADF having two statements $a$ and $b$ where the acceptance condition of $a$ is $b$ and vice versa,  the set $\{a,b\}$ is not a stable model. 
Thus, similarly to answer-set and ADF stable semantics, a principle of our semantics is that arguments cannot be circularly justified.

Finally, we believe that our approach is also complementary to that using intermediate translations to AF to define the Rec-BAF semantics.
Indeed,  our approach can also be used to provide additional tools for computing complete extensions using answer set solvers \cite{ASP-Systems} and program rewriting techniques \cite{unfolding,SakamaR17}.

\bibliographystyle{tlplike}
\bibliography{refs}

\clearpage

\section*{Appendix A - Partial Stable Models}
The semantics of a logic program is given by the set of its partial stable models (PSMs) (corresponding to complete extensions of AFs \cite{CaminadaSAD15}).
We summarize the basic concepts underlying the notion of PSMs \cite{Sacca97}. 

A (normal) logic program (LP) is a set of rules of the form
$A \leftarrow B_1 \wedge \cdots \wedge B_n$, with $n \geq 0$, where $A$ is an atom, called head, and $B_1\wedge \cdots \wedge B_n$ is a conjunction of literals, called body. 
We consider programs without function symbols.
Given a program $P$, $ground(P)$ denotes the set of all ground instances of the rules in $P$. 
The Herbrand Base of a program $P$, i.e., the set of all ground
atoms which can be constructed using predicate and constant symbols occurring in $P$, is denoted by $B_P$, whereas $\neg B_P$ denotes the set $\{ \neg A \mid A \in B_P \}$. 
Analogously, for any set $S \subseteq B_P \cup \neg B_P$, $\neg S$ denotes the set $\{ \neg A \mid A \in S \}$, where $\neg \neg A = A$.
Given $I \subseteq B_P \cup \neg B_P$, $pos(I)$ (resp., $neg(I)$) stands for $I \cap B_P$ (resp., $\neg I \cap B_P$).
$I$ is \emph{consistent} if $pos(I) \cap \neg neg(I) = \emptyset$, otherwise $I$ is \emph{inconsistent}.

Given a program $P$, $I \subseteq B_P \cup \neg B_P$
is an \emph{interpretation}  of $P$ if $I$ is consistent. 
Also, $I$ is \emph{total} if 
$pos(I) \cup neg(I) = B_P$, 
\emph{partial} otherwise.
A partial interpretation $M$ of a program $P$ is a
\emph{partial model} of $P$ if for each $\neg A \in M$ every rule in $ground(P)$ having as head $A$ contains at least one body literal $B$ such that $\neg B \in M$.
Given a  program $P$ and a partial model $M$, the positive instantiation of $P$ w.r.t. $M$, denoted by $P^M$, is obtained from $ground(P)$ by deleting:\
$(a)$
each rule containing a negative literal $\neg A$ such that $A \in pos(M)$;\
$(b)$\
each rule containing a literal $B$ such that neither $B$ nor $\neg B$ is in $M$;\
$(c)$
all the negative literals in the remaining rules.
$M$ is a partial stable model of $P$ iff $M$ is the minimal model of $P^M$.
Alternatively, $P_M$ could be built by replacing every negated body literal in $ground(P)$ by its truth value.

The set of partial stable models of a logic program $P$, denoted by ${\cal PS}(P)$, define a meet semi-lattice. 
The \textit{well-founded} model (denoted by ${\cal WF}(P)$) and the \textit{maximal-stable} models ${\cal MS}(P)$, are defined by considering $\subseteq$-minimal and $\subseteq$-maximal elements. 
The set of (total) \textit{stable} models (denoted by ${\cal TS}(P)$) is obtained by considering the maximal-stable models which are total.

The semantics of a logic program is given by the set of its partial stable  models or by one of the restricted sets above recalled.
\end{document}